\newcommand{\ub}[1]{\textbf{#1}\hspace{0.5em}}
\newcommand{\method}{GOKU\xspace}
\newcolumntype{C}[1]{>{\centering\arraybackslash}p{#1}}
\theoremstyle{plain}
\newtheorem{theorem}{Theorem}[section]
\newtheorem{lemma}[theorem]{Lemma}
\theoremstyle{definition}
\newtheorem{definition}[theorem]{Definition}
\theoremstyle{remark}
\newtheorem{remark}[theorem]{Remark}
\icmltitlerunning{Mitigating Over-Squashing in Graph Neural Networks by Spectrum-Preserving Sparsification}
\begin{document}

\twocolumn[
\icmltitle{Mitigating Over-Squashing in Graph Neural Networks \\ by Spectrum-Preserving Sparsification
}





\begin{icmlauthorlist}
\icmlauthor{Langzhang Liang}{KAIST_AI}
\icmlauthor{Fanchen Bu}{KAIST_EE}
\icmlauthor{Zixing Song}{cam}
\icmlauthor{Zenglin Xu}{fdu,sais}
\icmlauthor{Shirui Pan}{griffith}
\icmlauthor{Kijung Shin}{KAIST_AI,KAIST_EE}
\end{icmlauthorlist}

\icmlaffiliation{KAIST_AI}{Kim Jaechul Graduate School of Artificial Intelligence, KAIST, Seoul, Republic of Korea}
\icmlaffiliation{KAIST_EE}{School of Electrical Engineering, KAIST, Daejeon, Republic of Korea}
\icmlaffiliation{cam}{Department of Engineering, University of Cambridge, Cambridge, United Kingdom}
\icmlaffiliation{fdu}{Artificial Intelligence Innovation and Incubation Institute, Fudan University, Shanghai, China}
\icmlaffiliation{sais}{Shanghai Academy of Artificial Intelligence for Science, Shanghai, China}
\icmlaffiliation{griffith}{School of Information and Communication Technology, Griffith University, Gold Coast, Australia}

\icmlcorrespondingauthor{Kijung Shin}{kijungs@kaist.ac.kr}

\icmlkeywords{Machine Learning, ICML}

\vskip 0.3in
]



\printAffiliationsAndNotice{} 

\begin{abstract}
The message-passing paradigm of Graph Neural Networks often struggles with exchanging information across distant nodes
typically due to structural bottlenecks in certain graph regions, a limitation known as \textit{over-squashing}.
To reduce such bottlenecks, \textit{graph rewiring}, which modifies graph topology, has been widely used.
However, existing graph rewiring techniques often overlook the need to preserve critical properties of the original graph, e.g., \textit{spectral properties}. 
Moreover, many approaches rely on increasing edge count to improve connectivity, which introduces significant computational overhead and exacerbates the risk of over-smoothing.
In this paper, we propose a novel graph rewiring method that leverages 
\textit{spectrum-preserving} graph \textit{sparsification}, for mitigating over-squashing.
Our method generates graphs with enhanced connectivity while maintaining sparsity and largely preserving the original graph spectrum, 
effectively balancing structural bottleneck reduction and graph property preservation.
Experimental results validate the effectiveness of our approach, demonstrating its superiority over strong baseline methods in classification accuracy and retention of the Laplacian spectrum.
\end{abstract}

\vspace{-5mm}
\section{Introduction and Related Works}
Graphs are a fundamental data structure for representing complex relational systems, where nodes signify entities and edges represent interactions \cite{DBLP:books/sp/Harary1969, DBLP:books/daglib/0030488, DBLP:journals/corr/abs-1307-5675}. Their capacity to capture both structural and relational information makes them valuable across diverse fields, including social networks, biological systems, and recommendation engines \cite{DBLP:journals/nature/DBLP/Battaglia, DBLP:conf/nips/Sanchez-Gonzalez18, DBLP:conf/icml/GilmerSRVD17, DBLP:conf/recsys/BergKW17,DBLP:journals/natmi/KohNPMW24}. Graph Neural Networks (GNNs) are a specialized class of neural networks developed to process graph-structured data by utilizing both node features and edge-based structural information \cite{DBLP:journals/tnn/ScarselliGTHM09,DBLP:conf/iclr/KipfW17, DBLP:conf/iclr/VelickovicCCRLB18}. GNNs iteratively update each node’s representation by aggregating information from its neighbors \cite{DBLP:conf/icml/GilmerSRVD17}.

Recently, the problem of \textit{over-squashing} in GNNs has gained attention. Over-squashing occurs when information from distant nodes is forced through graph structural bottlenecks, causing message distortion during aggregation \cite{DBLP:conf/iclr/0002Y21}. This issue worsens in deeper networks as the receptive field grows, making it difficult to encode large amounts of information into a fixed-size vector. Consequently, over-squashing limits GNNs' ability to capture long-range dependencies, degenerating performance on tasks that need global context.

\begin{figure*}[tb]
\vspace{-1mm}
\begin{center}
\centerline{\includegraphics[width=2.05\columnwidth]{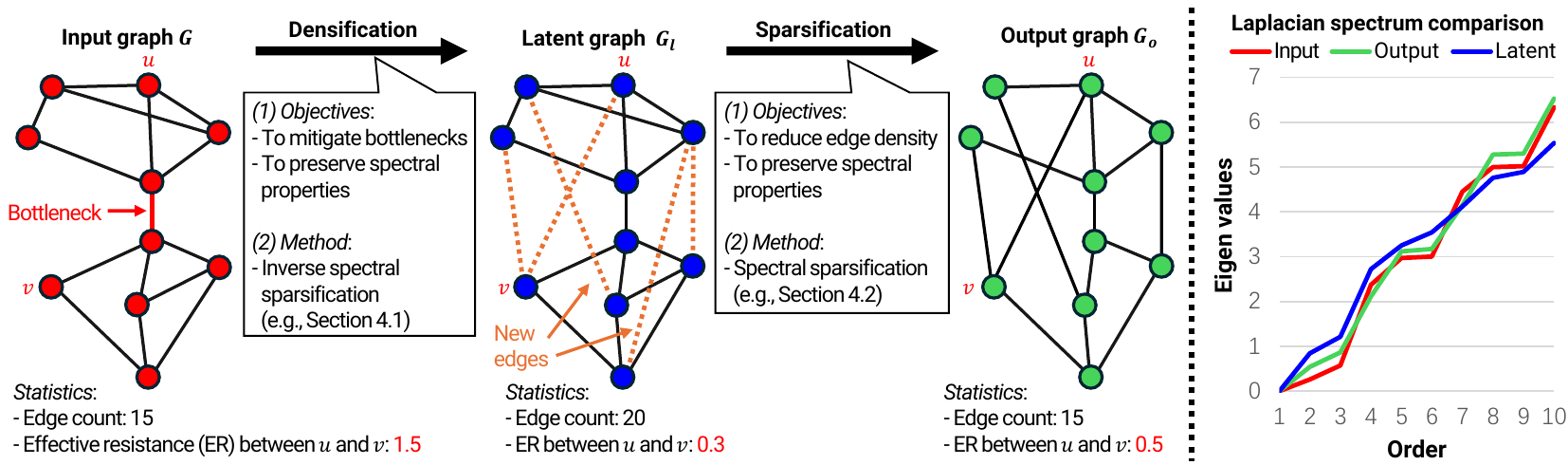}}
\vspace{-1mm}
\caption{
Overview of the proposed densification-sparsification rewiring (DSR) framework.
The framework starts by densifying the input graph through an inverse spectral sparsification process, which aims to alleviate structural bottlenecks while preserving spectral properties. Following this, sparsification is applied to the densified graph (referred to as the latent graph) to generate an output graph. This output graph not only exhibits improved connectivity, as measured by effective resistance, but also retains similar spectral properties and maintains the same edge density as the original input graph.}

\label{fig:method}
\end{center}

\end{figure*}

Many efforts to address over-squashing have focused on \textit{graph rewiring} techniques. These methods modify the edge set of \textbf{input graph} (original graph)---either by adding or reconfiguring edges---to create an \textbf{output graph} that enhances connectivity and enables more effective message passing in sparse or bottlenecked areas. The rewired output graph is then used for downstream tasks, such as node classification and graph classification.

Despite advancements, existing graph-rewiring methods have two notable limitations. 
First, they often substantially modify the edge set and fail to preserve graph structural integrity, especially the Laplacian spectrum.
For instance, Delaunay graph-based rewiring~\citep{DBLP:conf/icml/AttaliBP24} offers benefits such as reduced graph diameter and lower effective resistance. However, it constructs graphs solely based on node features, completely disregarding the original graph topology. Similarly, graph transformers~\citep{DBLP:conf/nips/YingCLZKHSL21,DBLP:conf/nips/KreuzerBHLT21} essentially rely on fully connected graphs, while expander graph-based rewiring methods~\citep{DBLP:conf/log/DeacLV22,DBLP:journals/corr/abs-2311-07966} may even introduce entirely new node sets, diverging significantly from the structure of the input graph. It is important to preserve graph spectrum during rewiring, since many learning tasks, such as clustering and semi-supervised learning on graphs, rely on spectral properties to ensure accurate results.

Second, these methods often introduce many additional edges to enhance connectivity, which increases the computational cost of learning on the rewired output graph. Moreover, a denser output graph increases the risk of over-smoothing, where information from neighboring nodes becomes excessively blended, undermining the ability to maintain distinct node features~\cite{li2018deeper,chen2020measuring,liang2023tackling}. 

To address these challenges, we propose \textbf{GOKU}, a graph rewiring method designed to address over-squashing in GNNs.
Specifically, given an input graph \( G  \), the objective is to generate an output graph \( G_o \) with improved connectivity by increasing a few of the smallest eigenvalues of its Laplacian, while preserving sparsity and ensuring the remaining eigenvalues stay close to those of \( G \). 
This balance allows \( G_o \) to improve message-passing capabilities without deviating significantly from the spectrum of the original graph, which reflects the graph's essential characteristics.
This is inspired by spectral sparsification techniques that allow graphs with different edge sets to maintain spectral similarity.
GOKU operates in two phases: 
\textbf{(1; densification)} it first densifies the input graph $G$ by solving an inverse graph sparsification problem, creating a densified latent graph with improved connectivity;
\textbf{(2; sparsification)} it then applies a spectral sparsification algorithm to reduce edge set size, producing a sparse output graph $G_o$ for downstream classification tasks. 
This densification-sparsification paradigm is illustrated using an example graph in Figure~\ref{fig:method}.

\ub{Contributions.} Our main contributions are as follows:
\begin{itemize}[topsep=0pt, partopsep=0pt, itemsep=0.5pt, parsep=0pt,leftmargin=*]
\item We propose a \textbf{novel densification-sparsification rewiring (DSR) paradigm} for mitigating over-squashing (Sec.~\ref{sec:paradigm}).
To our knowledge, DSR is the first rewiring paradigm that simultaneously 
(1) enhances connectivity for effective message passing,
(2) maintains graph sparsity for computational efficiency, 
and (3) explicitly preserves the graph spectrum for structural fidelity.
\item We present \textbf{a novel method GOKU, a practical and effective instance of the proposed DSR paradigm} (Sec.~\ref{sec:instance}).
With efficient algorithmic designs, GOKU operates in nearly linear time with respect to both nodes and edges.
\item We conduct \textbf{extensive experiments} (Sec.~\ref{sec:experiment}).
GOKU achieves better downstream performance than existing methods in both node and graph classification tasks on 10 datasets, while effectively balancing improving connectivity and preserving graph spectrum.
\end{itemize}

\ub{Related works and their limitations.} Numerous methods have been proposed to mitigate over-squashing in GNNs. Many approaches focus on modifying the graph structure through rewiring. For instance, First-order Spectral Rewiring (FoSR)~\citep{DBLP:conf/iclr/KarhadkarBM23} improves the first-order approximation of the spectral gap by adding edges. Curvature-based methods~\citep{DBLP:conf/iclr/ToppingGC0B22,DBLP:conf/icml/NguyenHNHON23} optimize connectivity by adding and removing edges based on geometric principles. ProxyGap~\cite{DBLP:journals/corr/abs-2404-04612} employs both edge deletions and additions to optimize the spectral gap, based on the Braess paradox~\cite{DBLP:journals/mmor/Braess68}. Expander graph-based rewiring~\cite{DBLP:conf/log/DeacLV22} maintains a small diameter using Cayley graphs, which may even have a different node set from the original graph. More recently, methods like Probabilistically Rewired Message-Passing Neural Networks~\cite{DBLP:conf/iclr/QianMAZBN024} explore probabilistic approaches to rewiring. \citet{DBLP:conf/icml/BlackWN023} propose minimizing total effective resistance to reduce bottlenecks. \citet{DBLP:conf/icml/AttaliBP24} construct new graphs based on node features, completely discarding the original topology. Other methods address over-squashing without altering the original graph topology. \citet{DBLP:conf/icml/GiovanniGBLLB23} investigates the impact of model width, depth, and topology. ~\cite{DBLP:journals/corr/abs-2212-06538} and PANDA~\cite{DBLP:conf/icml/0002PWCP24} propose alternative message-passing mechanisms, with PANDA focusing on expanded width-aware message passing. Cooperative Graph Neural Networks~\cite{DBLP:conf/icml/FinkelshteinHBC24} also introduce learned cooperative mechanisms in message passing that can help alleviate information bottlenecks.

While these approaches improve connectivity, they often substantially change the topology, overlooking the significance of retaining the graph structure.
To address this limitation, Locality-aware rewiring~\citep{DBLP:conf/iclr/BarberoVSBG24} has been recently proposed to restrict new connections to $k$-hop neighbors, avoiding arbitrary long-range modifications. 
However, there still remains a lack of methods that explicitly integrate spectrum preservation—a key aspect that captures essential graph structural properties—into mitigating over-squashing. 
To bridge this gap, this paper introduces a novel approach leveraging spectral sparsification methods, which produce sparsified graphs that explicitly maintain spectral similarity to the original graph. Our code is available at \href{https://github.com/Jinx-byebye/GOKU}{https://github.com/Jinx-byebye/GOKU}.


\section{Preliminaries and Background}\label{sec:background}

\subsection{Preliminaries on Graphs}\label{sec:background:notations}
\vspace{-1mm}
\ub{Graph and notation.} We define basic notations used in this paper. Let \( G = (V, E, w) \) be an undirected weighted graph, where \( V \)  is the node set with \( n = |V| \), \( E \) is the edge set with \( m = |E| \), and \( w: E \to \mathbb{R}^+ \) assigns positive weights to edges. Note that unweighted graphs are treated as special cases of weighted graphs with all weights being one. 
Nodes have a feature matrix \( X \in \mathbb{R}^{n \times d} \). The weighted adjacency matrix \( A \in \mathbb{R}^{n \times n} \) has \( A_{ij} = w_{ij} \) if \( (v_i, v_j) \in E \), otherwise \( A_{ij} = 0 \). The degree matrix \( D \) is diagonal with \( D_{ii} = \sum_{j=1}^{n} A_{ij} \), and the Laplacian matrix is \( L = D - A \). 
For node classification, the label matrix \( Y \in \{0, 1\}^{n \times c} \) encodes node label assignments across
\( c \) classes.
For graph classification, the dataset comprises \( \{(G_i, y_i)\}_{i=1}^N \), where each graph \( G_i = (V_i, E_i) \) is labeled \( y_i \).

\begin{table}[tbp]
    \centering
    \begin{tabular}{lccc}
        \hline
        Task                 & Dataset  & Same Class& Different Class\\
        \hline
        \multirow{4}{*}{Graph} & IMDB     & 0.59       & 0.21          \\
                               & Mutag    & 0.58       & 0.23          \\
                               & Proteins & 0.13       & 0.05          \\
                               & Enzymes  & 0.05       & 0.03          \\
        \hline
        \multirow{3}{*}{Node} & Cora     & 0.2817     & 0.0736        \\
                               & Citeseer & 0.0013     & 0.0001        \\
                               & Pubmed   & 0.3383     & 0.1121        \\
        \hline
    \end{tabular}
    \caption{Average spectral similarity between graphs and nodes within the same class and across different classes. For the graph-level (IMDB, Mutag, Proteins, Enzymes), similarity is computed based on eigenvalue distributions using the fastdtw algorithm (as they have different numbers of nodes). For the node level (Cora, Citeseer, Pubmed), similarity is measured as the average cosine similarity between node feature vectors derived from eigenvectors.}
     \label{tab:combined_sim_results}
\end{table}

\ub{Graph spectrum.} The graph spectrum of a graph is the set of eigenvalues \( \lambda_1=0 \le \lambda_2 \le \dots \le \lambda_n \) of its Laplacian $L$.
It is a ``fingerprint" that encodes topological characteristics, e.g., node centrality, community structure, and clustering coefficient~\citep{chung1997spectral}. 
Since spectral properties are invariant to graph isomorphisms, they provide a robust framework for comparing and analyzing graphs. Preserving spectral similarity is essential for tasks relying on these intrinsic graph and node characteristics. For graph classification, graphs belonging to the same class tend to exhibit more similar eigenvalue distributions than graphs from different classes. For node classification, nodes from the same class exhibit higher average cosine similarity in their eigenvector-based feature representations compared to nodes from different classes (using components from the first 128 eigenvectors, resulting in a 128-dimensional feature vector per node). The results supporting these observations for both tasks are summarized in Table~\ref{tab:combined_sim_results}. 

\ub{Spectral gap and effective resistance.} 
The spectral gap (or algebraic connectivity) of a graph~\citep{chung1997spectral, fiedler1973algebraic} is the second-smallest (i.e., smallest non-zero) eigenvalue $\lambda_2$ of its Laplacian matrix.\footnote{We define \textit{spectral gap} following, e.g., \citet{DBLP:conf/iclr/KarhadkarBM23}, while it can be alternatively defined as the difference between the two largest eigenvalues of the adjacency matrix.}
A \textbf{small} spectral gap implies many structural bottlenecks, i.e., \textbf{poor} connectivity.
The effective resistance (ER) \( R_{u,v} \)~\citep{doyle1984random} between nodes \( u \) and \( v \), quantifies the difficulty of flow between the two nodes:
$R_{u,v} = (e_u - e_v)^\top L^+ (e_u - e_v),
$
where \( e_u \) and \( e_v \) are indicator vectors for \( u \) and \( v \), and \( L^+ \) is the Moore-Penrose pseudoinverse of the Laplacian \( L \). 
A \textbf{lower} \( R_{u,v} \) indicates more alternative short paths between \( u \) and \( v \), implying $(u, v)$ is \textbf{well-connected}.

\ub{Graph Neural Networks (GNNs).} GNNs update node representations iteratively by aggregating information from neighboring nodes:
$
h_v^{(l)} = U_l\left(h_v^{(l-1)}, M_l\left(h_v^{(l-1)}, \{h_u^{(l-1)} \mid u \in \mathcal{N}(v)\}\right)\right),
$
where \( M_l \) and \( U_l \) are the message and update functions at layer \( l \), and \( h_v^{(l)} \) are the node embedding of $v$ at layer \( l \).
For graph classification, we use a readout function \( \hat{y} = R \left( \{ h_v^{(L)} | v \in V \} \right) \) to aggregate all node embeddings.

\subsection{Graph Sparsification}\label{sec:graph_sparsification}
Graph Sparsification~\cite{DBLP:conf/stoc/BenczurK96, DBLP:conf/stoc/SpielmanS08, DBLP:journals/siamcomp/SpielmanT11} is a family of techniques aimed at approximating a graph \( G \) with a sparse graph \( H \), ensuring that \( H \) is an efficient computational proxy for \( G \) while minimizing approximation errors.

\ub{Spectral sparsification.} Among these techniques, \textit{spectral sparsification} produces a sparsifier of the original graph whose Laplacian quadratic form closely approximates that of the original graph across all real vector inputs.

\begin{definition}[Spectrally Similar Graphs]\label{def:spectrally_similar}
Let \( G = (V, E, w) \) and \( H = (V, E_H, w_H) \) be undirected weighted graphs. 
We say \( H \) is \(\mathbf{(1 \pm \epsilon)}\)-\textbf{spectrally similar} to \( G \) for an \textit{approximation error parameter} $\epsilon \in (0, 1)$, denoted by $H \overset{1 \pm \epsilon}{\underset{}{\approx}} G$, if the Laplacian matrices \( L_H \) and \( L \) (of $H$ and $G$ respectively) satisfy the following inequality for all \( x \in \mathbb{R}^n \):
\begin{equation}\label{eq:similar}
(1 - \epsilon) x^\top L x \leq x^\top L_H x \leq (1 + \epsilon) x^\top L x.
\end{equation}
\end{definition}

\begin{remark}
The definition of $x^\top L x = \sum_{(u,v) \in E} w_{uv}(x_u - x_v)^2$ intrinsically connects to various topological characteristics of the graph. For instance, by choosing $x$ to be an indicator vector for a set of nodes $S \subseteq V$, i.e., $x_u = 1$ if $u \in S$ and $x_u = 0$ if $u \notin S$, the quadratic form $x^\top L x$ becomes the weight of the graph cut separating $S$ from its complement $\bar{S}$, denoted $\text{cut}(S, \bar{S})$. That is, $x^\top L x = \text{cut}(S, \bar{S})$. Similarly, other choices of $x$ can reveal connections to node degrees or relationships between eigenvalues. This comprehensive measure allows us to assess the similarity of graphs based on their fundamental structural properties.
\end{remark}

\begin{definition}[Spectral Sparsifier]\label{def:spectral_sparsification}
Let \( G = (V, E, w) \) and \( H = (V, E_H, w_H) \) be undirected weighted graphs.
$H$ is called a \textbf{sparsifier} of \( G \) if \( E_H \subseteq E \). If a sparsifier \( H \) is also \({(1 \pm \epsilon)}\)-spectrally similar to \( G \), we say $H$ is a \({(1 \pm \epsilon)}\)-\textbf{spectral sparsifier} of \( G \).
\end{definition}


A common approach to sparsification (i.e., constructing a sparsifier) involves repeating the following process for \( q \) rounds: (1) sampling an edge \( e_i \in E \) with replacement according to a probability distribution \( \{p_e\}_{e \in E} \), and (2) incrementing its weight by \( \frac{w_{e_i}}{q p_{e_i}} \). This process ensures that the expected weight of each edge \( e \) in the sparsified graph \( H \) matches its original weight in \( G \), since \( qp_e \cdot \frac{w_e}{q p_e} = w_e \). Consequently, the total edge weight is also preserved in expectation.
Different sparsification methods primarily differ in their choice of \( \{p_e\} \). Our rewiring paradigm builds upon this sparsification framework, which we introduce in the following section.


\section{DSR: Proposed Rewiring Paradigm\\\normalsize\hspace{1.2em}{--- \textit{A General Framework}}}\label{sec:paradigm}
In this section, we introduce a novel Densification-Sparsification Rewiring (DSR) paradigm to mitigate over-squashing in GNNs, consisting of two sequential modules: \textit{graph densification} and \textit{graph sparsification.}
We denote the \textbf{input graph} (original graph) as \( G \), the intermediate graph after densification (referred to as the \textbf{latent graph}) as \( G_l \), the final \textbf{output graph} after sparsification as \( G_o \), i.e., \( G \xrightarrow{\text{densification}} G_l \xrightarrow{\text{sparsification}} G_o \).
Below, we first explain the motivations behind these two steps, and then discuss their details.
This section introduces the general paradigm, while Section~\ref{sec:instance} presents a concrete instance of DSR.

\subsection{Motivations}\label{sec:paradigm:motivations}
\ub{Densification.} 
Our core idea is to view the input graph \( G = (V, E) \) as a spectral sparsifier of an unknown latent graph \( G_l = (V, E_l) \) that has good connectivity, and we aim to reconstruct $G_l$.
In this view, the structural bottlenecks observed in \( G \) arise due to the omission of certain critical edges during the sparsification process from $G_l$ to $G$.
By reconstructing \( G_l \), we aim to recover these ``missing edges'', which can help mitigate over-squashing and improve the overall graph structure. 
How can we identify those ``missing edges'' and reconstruct \( G_l \) from the ``sparsified'' graph \( G \)?

This question defines the \textbf{densification} (or \textbf{inverse sparsification}) problem: given a graph \( G \), we aim to reconstruct the latent graph \( G_l \) 
such that $G_l$ is the most likely (as in Maximum Likelihood Estimation) latent graph from which a spectral sparsification algorithm produces $G$.
Since \( G \) is a spectral sparsifier of \( G_l \), \( G_l \) is expected to effectively preserve the spectrum of \( G \) while improving connectivity.

To reintroduce missing edges crucial for increasing connectivity, we consider the inverse process of \textbf{unimportance-based spectral sparsification (USS)}.
In USS, crucial edges tend to be removed, while less crucial edges tend to remain.
Therefore, if $G$ is the result of applying USS to $G_l$, the ``remaining edges'' in $G$ are less crucial, while the ``missing edges'' that are in $G_l$ but not in $G$ are supposed to be crucial for connectivity, which we aim to recover.

\ub{Sparsification.} 
A drawback of densification is the increased computational complexity due to added edges in \( G_l \). To balance connectivity and efficiency, we apply \textbf{importance-based spectral sparsification} (ISS) to \( G_l \), pruning edges that are less critical for connectivity. 
As a result, the output \( G_o \) almost retains \( G_l \)'s connectivity level while preserving the sparsity and spectrum of $G$, making it an ideal proxy for downstream tasks on $G$.

\ub{Combination.}
When we combine the two steps, both \( G \) and \( G_o \) are spectral sparsifiers of \( G_l \).
Therefore, the \textbf{\textit{spectrum is explicitly preserved}} from $G$ to $G_o$:
\[
    G_o \overset{1 \pm \epsilon}{\underset{}{\approx}} G_l; \quad G_l \overset{1 \pm \epsilon}{\underset{}{\approx}} G \implies G_o \overset{(1 \pm \epsilon)^2}{\underset{}{\approx}} G.
\]
At the same time, by 
(1) recovering edges that are crucial for connectivity in densification and 
(2) removing edges that are less crucial in sparsification, we \textbf{\textit{maintain sparsity}} and \textbf{\textit{enhance the connectivity}} from $G$ to $G_o$.

\subsection{Detailed Descriptions}\label{sec:paradigm:details}

\ub{Densification.}
The graph densification problem is formulated as a Maximum Likelihood Estimation (MLE) problem.
\begin{definition}[Graph Densification Problem]
Given \( G = (V, E, w) \) and a spectral sparsification algorithm 
$\phi(\cdot)$ that outputs a
\({(1 \pm \epsilon)}\)-spectral sparsifier (see Def.~\ref{def:spectrally_similar}),\footnote{The existence of such $\phi(\cdot)$ is guaranteed by existing results. For example, \citet{DBLP:conf/stoc/BenczurK96} and \citet{DBLP:conf/stoc/SpielmanS08} proposed algorithms to construct a \({(1 \pm \epsilon)}\)-spectral sparsifier with $O(n \log n/\epsilon^2)$ edges for any graph.}
the densification problem aims to find a latent graph \( G_l = (V, E_l, w_l) \) to maximize the likelihood of \( G \) being the result of applying \( \phi(\cdot) \) to $G_l$.
Formally,
\begin{align}\label{eq:optimization_problem}
  & G_l =  
\underset{G_d = (V, E \cup E', w_d)}
{\operatorname{argmax}} \; \mathbb{P}( \phi(G_d)=G) \\
& \text{subject to}\;  E \cap E' = \emptyset, \; G_d \overset{1 \pm \epsilon}{\underset{}{\approx}} G. \nonumber 
\end{align}
Here, \( E \) is the edge set of the observed sparsifier \( G \), and \( E' \) is the set of ``missing edges'' removed during sparsification.
\end{definition}

\ub{Unimportance-based spectral sparsification (USS).} 
As mentioned in Section \ref{sec:paradigm:motivations}, in USS, edges that are crucial for connectivity tend to be removed, while less crucial edges tend to remain.
Therefore, the distribution $p_e$ for sampling edges (see Section~\ref{sec:graph_sparsification}) in $\phi(\cdot)$ should give higher $p_{e, \phi}$ values to edges that are less crucial for connectivity.
A natural approach is to set $p_e$ inversely proportional to the topological significance of $e$:
$p_{e, \phi} \propto \frac{1}{\text{Topological significance of}\ e }$,
where ``topological significance'' can be quantified using edge importance metrics, e.g., effective resistance or edge betweenness. 
We will provide detailed algorithmic choices in our practical instance of densification in Section~\ref{subsec:densification_instance}.

\ub{Sparsification.}
In graph sparsification, we use another spectral sparsification algorithm $\psi(\cdot)$ that outputs a \({(1 \pm \epsilon)}\)-spectral sparsifier, to
selectively retain edges with high topological significance in the latent graph \( G_l \).
By doing so, we reduce structural complexity, preserve the graph spectrum, and maintain the enhanced connectivity in \( G_l \).

\ub{Importance-based spectral sparsification (ISS).} 
As mentioned in Section~\ref{sec:paradigm:motivations}, in ISS (which is the opposite of USS), edges that are crucial for connectivity tend to remain.
Therefore, the distribution $p_{e, \psi}$ in $\psi(\cdot)$ should give higher $p_e$ values to edges that are more crucial.
We can similarly set $p_e$ proportional to the topological significance of $e$:
$p_{e, \psi} \propto {\text{Topological significance of}\ e }$, and the ``topological significance'' in USS and ISS can be defined differently.
We will provide detailed algorithmic choices in our practical instance of sparsification in Section~\ref{subsec:sparsification_instance}.

\ub{Comparison to existing methods.} 
Our method differs from existing approaches in several ways. \textbf{First}, it explicitly preserves spectrum through (inverse) spectral sparsification. 
\textbf{Second}, it integrates both densification and sparsification.
While prior works~\cite{DBLP:conf/iclr/ToppingGC0B22, DBLP:conf/icml/NguyenHNHON23, DBLP:conf/log/Fesser023, DBLP:journals/corr/abs-2404-04612} also modify edges to mitigate over-smoothing or enhance connectivity, our approach guarantees that the final sparsified graph never exceeds the original graph’s density.
\textbf{Third}, rather than deterministically selecting edges based on predefined connectivity metrics, our method employs a probabilistic edge modification strategy. \textbf{Finally}, leveraging graph spectral properties, the number of added edges \( |E'| \) is derived from a well-established theoretical result (see Theorem~\ref{thm:uss}) rather than being a heuristically chosen value.
\section{\method: Proposed Method\\\normalsize\hspace{1.2em}{--- \textit{A Practical Instance of DSR}}}\label{sec:instance}
In this section, we present a practical instance of the Densification-Sparsification Rewiring (DSR) paradigm, termed \method (\textbf{G}raph rewiring to tackle \textbf{O}ver-squashing by \textbf{K}eeping graph spectra thro\textbf{U}gh spectral sparsification). Implementing the DSR paradigm requires designing USS and ISS algorithms and solving the MLE problem in Eq.~\eqref{eq:optimization_problem}.

\subsection{Graph Densification: USS and the MLE Problem}\label{subsec:densification_instance}

\begin{algorithm}[t]
\caption{USS Algorithm}
\label{algo:DBS}
\begin{algorithmic}[1]
\STATE {\bfseries Input:} Graph $G_d = (V, E_d, w_d)$ and sampling count $q$
\STATE {\bfseries Output:} Graph $G_s = (V, E_s, w_s)$

\STATE Compute the Fielder vector $f$ (the eigenvector corresponding to the second-smallest eigenvalue $\lambda_2$ of Laplacian) and node degrees $\deg(v)$'s

\STATE Initialize $w_s(e) \gets 0, \forall e \in E_d$

\FOR{$i = 1$ to $q$}
    \STATE Sample an edge $e_i = (u, v) \in E_d$ with replacement, with probability $p_e \propto (\frac{\deg(u) + \deg(v)+1}{|f_u - f_v|})$
    \STATE Increment the weight: $w_s(e_i) \gets w_s(e_i) + \frac{w_d(e_i)}{p_e q}$
\ENDFOR
\STATE \textbf{Return} $G_s = (V, E_s, w_s)$ by collecting the sampled edges and their weights: $E_s = \{e \in E_d: w_s(e) > 0\}$
\end{algorithmic}
\end{algorithm}

For graph densification, we design an algorithm for the USS method, and propose an efficient procedure to construct the latent graph for the MLE problem in Eq.~\eqref{eq:optimization_problem}.

\ub{USS instance.}
The USS method should assign \textbf{low probabilities} \( p_e \) to edges \textbf{crucial} for maintaining connectivity. We design Algorithm~\ref{algo:DBS} as our USS method, which preferentially retains edges with a high value of \( \frac{\deg(u) + \deg(v) + 1}{|f_u - f_v|} \), where \( f_u \) and \( f_v \) are components of the Fiedler vector (the eigenvector associated with the second-smallest eigenvalue \( \lambda_2 \) of Laplacian; see Section~\ref{sec:background:notations}).

Why does a lower value of \( \frac{\deg(u) + \deg(v) + 1}{|f_u - f_v|} \) indicate higher significance for connectivity? The intuition is: a large \( |f_u - f_v| \) suggests weak connectivity between \( u \) and \( v \), and small degrees imply they are weakly connected to the remaining part of the graph. Removing such edges creates bottlenecks, making them critical for preserving graph connectivity.

Algorithm~\ref{algo:DBS} provably produces spectral sparsifiers.
\begin{theorem}[USS Gives Spectral Sparsifiers]
\label{thm:uss}
Let \( G_d = (V, E_d, w_d) \) and \( G_s = (V, E_s, w_s) \) be the input and output of Algorithm~\ref{algo:DBS}, with Laplacian $L$ and $\Tilde{L}$, respectively.
For any \( x \in \mathbb{R}^n \), define 
\( \kappa = \kappa(x) = \frac{\|C x\|_\infty^2}{p_{\text{min}} \|C x\|_2^2} \), where \( p_{\text{min}} = \min_{e \in E_d} p_e \), and
\( C \in \mathbb{R}^{2|E_d| \times |V|} \) denotes the signed incidence matrix of \( G_d \).\footnote{Each edge \( (u, v) \in E_d \) is represented by two rows: one with \( C_{k_1 u} = 1 \), \( C_{k_1 v} = -1 \), and zeros elsewhere; and another with \( C_{k_2 u} = -1 \), \( C_{k_2 v} = 1 \), and zeros elsewhere.}  
For any $\epsilon \in (0, 1)$, if \( q \geq \frac{\kappa^2}{2 \epsilon^2} \log 8 \), then with probability at least \( 3/4 \),
\[
(1 - \epsilon) x^T L x \leq x^T \tilde{L} x \leq (1 + \epsilon) x^T L x.
\]
\end{theorem}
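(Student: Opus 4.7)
The plan is to prove the theorem as a standard concentration-of-measure argument in the spirit of the Spielman–Srivastava spectral sparsification analysis, specialized here to a single fixed $x \in \mathbb{R}^n$ (rather than a uniform-over-$x$ operator-norm statement). The starting point is to rewrite both Laplacian quadratic forms as sums indexed by edges. Using the incidence matrix $C$, we have $x^{\top} L x = \sum_{e=(u,v) \in E_d} w_d(e)(x_u - x_v)^2$, and from the definitions, $\|Cx\|_\infty^2 = \max_{e=(u,v)}(x_u - x_v)^2$ and $\|Cx\|_2^2 = 2\sum_{e=(u,v)}(x_u - x_v)^2$, so that $\kappa$ captures the ratio between the largest ``edge contribution'' to the form and the total contribution, rescaled by $p_{\min}$.

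Next, for $i = 1,\ldots,q$, let $X_i$ denote the increment to $x^{\top}\tilde{L}x$ produced by the $i$-th sample: if sample $i$ lands on edge $e_i=(u,v)$, then $X_i = \frac{w_d(e_i)}{p_{e_i}\,q}(x_u - x_v)^2$. By construction, $x^{\top}\tilde{L}x = \sum_{i=1}^q X_i$, and a direct calculation shows
\begin{equation*}
\mathbb{E}[X_i] = \sum_{e=(u,v)\in E_d} p_e \cdot \frac{w_d(e)}{p_e\,q}(x_u - x_v)^2 = \frac{1}{q}\, x^{\top} L x,
\end{equation*}
so that $\mathbb{E}[x^{\top}\tilde{L}x] = x^{\top} L x$. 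The $X_i$ are i.i.d.\ and nonnegative, with $0 \le X_i \le M/q$, where $M := \max_{e=(u,v)}\frac{w_d(e)(x_u - x_v)^2}{p_e}$.

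The concentration step is then a direct application of Hoeffding's inequality. With $S := x^{\top}L x$, taking $t = \epsilon S$ gives
\begin{equation*}
\Pr\!\left[\,|{\textstyle\sum_i X_i} - S| \ge \epsilon S\,\right]
\;\le\; 2\exp\!\left(-\frac{2\epsilon^2 S^2}{\sum_{i=1}^q (M/q)^2}\right)
\;=\; 2\exp\!\left(-\frac{2\epsilon^2 S^2\, q}{M^2}\right).
\end{equation*}
Requiring the right-hand side to be at most $1/4$ yields $q \ge \frac{M^2}{2\epsilon^2 S^2}\log 8$, so it only remains to show $M/S \le \kappa$. Substituting the expressions for $M$ and $S$, pulling the weights through (or bounding $p_e \ge p_{\min}$), and recognising $\max_e(x_u-x_v)^2 / \sum_e (x_u-x_v)^2$ as exactly the (unweighted) ratio $\|Cx\|_\infty^2 / \tfrac{1}{2}\|Cx\|_2^2$ gives $M/S \le \kappa$ (absorbing the factor of $2$ from the doubled incidence convention into the definition), which closes the argument.

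The main obstacle I anticipate is the bookkeeping in the last step: matching the edge-weighted quantities $M$ and $S$ to the unweighted $\|Cx\|_\infty$ and $\|Cx\|_2$ appearing in $\kappa$, and correctly accounting for the fact that $C$ in the statement has two rows per edge (so $\|Cx\|_2^2$ carries an extra factor of $2$ compared with the usual signed incidence matrix). A secondary subtlety is that Hoeffding gives the cleanest route here because $X_i \ge 0$ and has a sharp uniform upper bound $M/q$; using Bernstein would require a separate variance bound and would not improve the constants that lead to the stated $\frac{\kappa^2}{2\epsilon^2}\log 8$ threshold. Once the identification $M/S \le \kappa$ is in place, the theorem follows immediately by substituting into the Hoeffding tail bound.
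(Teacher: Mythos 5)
Your proposal follows essentially the same route as the paper's own proof: decompose $x^{\top}\tilde{L}x$ into i.i.d.\ per-sample increments $X_i$ with mean $\tfrac{1}{q}x^{\top}Lx$, bound their range via $p_{\min}$ and $\|Cx\|_\infty^2$, apply Hoeffding with $t=\epsilon\, x^{\top}Lx$, and identify the resulting ratio with $\kappa$. The factor-of-two bookkeeping you flag (from the doubled incidence convention, so that $\|Cx\|_2^2 = 2\sum_e (x_u-x_v)^2$) is likewise glossed over in the paper's proof, so your argument is at least as careful as the original.
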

\begin{proof}
Proof is deferred to Appendix~\ref{proof:thm:uss}.
\end{proof}

\textbf{Solve the MLE by inverse sparsification.} 
We now solve the MLE problem in Eq.~\eqref{eq:optimization_problem} using Algorithm~\ref{algo:DBS} as the USS method, i.e., \( \phi(\cdot) \) in Section~\ref{sec:paradigm:details}. We treat the input graph \( G \) as the observed sparsifier, representing the output of Algorithm~\ref{algo:DBS}, and aim to find the latent graph \( G_l \) as the most likely input to Algorithm~\ref{algo:DBS} that would produce \( G \).

However, solving this MLE problem presents several challenges.  
First, the search space of possible \( G_d \)'s is intractable, as it includes all supergraphs of \( G \) and all possible edge weight assignments.  
Second, even for a given \( G_d \), computing the exact probability \( \mathbb{P}(\phi(G_d) = G) \) is difficult.

To address these challenges, we introduce a practical procedure with simplifications and approximations to efficiently obtain the latent graph $G_l = (V, E_l, w_l)$.
For the first challenge, to reduce search space, we assume uniform edge weights in \( G_l \), i.e., \( w_l(e_1) = w_l(e_2) \) for any \( e_1, e_2 \in E_l \), focusing on optimizing the edge set.
For the second challenge, to simplify calculation, we approximate \( \mathbb{P}(\phi(G_d) = G) \) as:
\begin{align}\label{eq:densification_approx}
    &~\mathbb{P}(\phi(G_d)=G) \nonumber \\
    \approx~& \prod_{e \in E} (\mathbb{P}(e \in \phi(G_d)))^{w_e} \prod_{e' \in E'} (\mathbb{P}(e' \notin \phi(G_d)))^{\Bar{w}} \nonumber \\ 
    =~& \prod_{e \in E} (1 - (1 - p_e)^q)^{w_e} \prod_{e' \in E'} ((1 - p_{e'})^q)^{\Bar{w}},
\end{align}
where $\Bar{w} = \sum_{e \in E} w_e / |E|$ is the average edge weight, and recall $E_d = E \cup E'$.
In Eq.~\eqref{eq:densification_approx}, we simplify the formula by 
(1) assuming edge independence, and
(2) using edge weights directly as power exponents (or equivalently, linear weights in the log probability).
Since we assume uniform edge weights in $G_l$, we omit $w_l$ in the formula.

With the above simplifications and approximations, now, optimization over \( G_d \) is equivalent to optimization over the ``missing edges'' $E'$, i.e., we aim to find
$E' = {\operatorname{argmax}_{E^*}} \prod_{e \in E} (1 - (1 - p_e)^q)^{w_e} \prod_{e' \in E^*} ((1 - p_{e'})^q)^{\Bar{w}}$.
Below, we discuss the detailed procedure to find such $E'$.



First, we select an approximation error parameter \( \epsilon \in (0,1) \), set \( x \) as the leading eigenvector (the eigenvector associated with the largest eigenvalue $\lambda_n$) {of the Laplacian} of \( G \), and compute the corresponding condition number \( \kappa = \kappa(x) \) (see Theorem~\ref{thm:uss}).
{We use the leading eigenvector since we aim to improve the small eigenvalues (e.g., the spectral gap $\lambda_2$; see Section~\ref{sec:background:notations}) which are closely related to structural bottlenecks, while preserving the large eigenvalues.}
The choice of \( \epsilon \) ensures that a sufficient number of new edges can be added to \( E' \), specifically requiring \( |E'| = |E_l| - |E| \) to exceed a predefined threshold \( \alpha \).  
Next, we set \( q = \frac{\kappa^2}{2 \epsilon^2} \log 8 \), and determine \( |E_l| \) such that when sampling \( q \) random edges with replacement from \( E_l \), the subgraph contains exactly \( |E| \) distinct edges in expectation. Further details on this process are provided in Appendix~\ref{appendix:deter_E_l}.


Second, after determining \( |E_l| \) (and thus \( |E'| \)), the task reduces to maximizing the objective function in Eq.~\eqref{eq:densification_approx} by choosing \( |E'| \) edges from the complement set \( E_{\text{comp}} = \{(u, v) : u, v \in V\} \setminus E \), with \( \binom{|E_{\text{comp}}|}{|E'|} \) possible combinations, which is still intractable. To improve efficiency, we focus on edges between ``promising nodes'' through two steps:  
   \textbf{(1) Candidate set construction}: Identify the \( 2j \) nodes with the largest absolute Fiedler values and the \( 2j \) nodes with the smallest degrees, where \( j \) is the smallest integer such that \( \binom{j}{2} \geq |E'| \). This gives \( \binom{4j}{2} \approx 16|E'| \) edge combinations.
   {Pairs between such nodes are likely to have a small value of $\frac{\deg(u) + \deg(v)+1}{|f_u - f_v|}$, and thus are crucial for connectivity.}
   \textbf{(2) Random selection}: Assign each edge \( e \) in the candidate set a probability \( c_e \) proportional to the value of the objective function Eq.~\eqref{eq:densification_approx} {assuming $e$ is inserted into $G$}.   
   We then sample \( |E'| \) edges from the candidate set {at once} based on the distribution \( \{c_e\} \) and construct \( G_l \) by including these edges.
Finally, we re-scale the edge weights in \( G_l \) so that each edge in \( E_l \) has weight \( |E| / |E_l| \), ensuring that \( G_l \) and \( G \) have the same total edge weight, as the total edge weight is typical preserved in sparsification (see Section~\ref{sec:graph_sparsification}). 
See Appendix~\ref{appendix:goku} for more details.

\subsection{Graph Sparsification: ISS}\label{subsec:sparsification_instance}


After deriving \( G_l \), we sparsify it using importance-based spectral sparsification (ISS) to obtain the output graph \( G_o \).
We propose to use effective resistance (ER) to measure edge significance in ISS (see Sec.~\ref{sec:background:notations} for the definition of ER).

\textbf{ISS instance.} We adapt a well-known ER-based sparsification~\cite{DBLP:conf/stoc/SpielmanS08}, but further incorporate node features. 
Specifically, each edge \((u, v)\) is sampled with probability \(p_e \propto (1 + S_e) R_e\), where \(S_e = (1 + \cos(x_u,x_v)) / 2 \in [0,1]\) is the normalized cosine similarity between original node features, and \(R_e\) is the edge’s ER. Heterophily describes a type of graph where connections predominantly occur between dissimilar nodes. This characteristic has been recognized as a significant challenge for effectively training GNNs across various studies\cite{zhu2020beyond,luan2022revisiting,liang2023predicting,liang2024sign}.
This prioritizes edges with high ER and feature similarity, thus retaining edges crucial to connectivity while filtering noisy edges with dissimilar node features (as in topological denoising; see, e.g., \citet{luo2021learning}).
We sample edges until \(\beta|E|\) distinct edges are selected, where \(0.5 < \beta \leq 1\), to ensure sparsity (specifically, \(G_o\) is not denser than \(G\)).
Due to the nature of the sparsification process (see Section~\ref{sec:graph_sparsification}), 
\( G_o \) has the same total edge weight as \( G_l \) in expectation. 
Recall in USS, the total edge weight of \( G_l \) matches that of \( G \).
Therefore, \( G_o \) also matches the total edge weight of \( G \).
The ISS algorithm is similar to Alg.~\ref{algo:DBS}, except for the probability distribution $\{p_e\}$.
For pseudocodes of our ISS instance and the GOKU method, please refer to Appendix~\ref{appendix:goku}.
Similar to USS, our instance of ISS also provably produces spectral sparsifiers.



\begin{theorem}[ISS Gives Spectral Sparsifiers]
\label{thm:iss}    
    Let \( G_d = (V, E_d, w_d) \) and \( G_s = (V, E_s, w_s) \) be the input and output of Algorithm~\ref{alg:er_sparsify}, with Laplacian $L$ and $\Tilde{L}$, respectively. 
    For any \( \epsilon \in (0, 1) \), there exists \( q = O(n\log n/\epsilon^2)\), such that
    \begin{equation}\nonumber
        \forall   x \in \mathbb{R}^n, \quad (1-\epsilon) {x}^T {Lx} \le {x}^T {\tilde{L} x} \le (1+\epsilon) {x}^T {Lx},
    \end{equation}
    holds with high probability.
\end{theorem}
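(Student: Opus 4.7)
The plan is to follow the Spielman--Srivastava template for effective-resistance-based sparsification, adapted to handle the feature-similarity multiplier $(1+S_e)$ in the sampling distribution. Write $b_e = e_u - e_v$ for each edge $e = (u,v)$, so that $L = \sum_{e \in E_d} w_e\, b_e b_e^T$. Each of the $q$ sampling rounds draws an edge $e_i$ with probability $p_{e_i} \propto (1+S_{e_i}) R_{e_i}$ and contributes
\[
Y_i \;=\; \frac{w_{e_i}}{q\, p_{e_i}}\, b_{e_i} b_{e_i}^T
\]
to the output, so that $\tilde{L} = \sum_{i=1}^q Y_i$ and $\mathbb{E}[Y_i] = L/q$, giving $\mathbb{E}[\tilde{L}] = L$.

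Next I would perform the usual change of basis. Let $\Pi = L^{+/2} L\, L^{+/2}$ denote the orthogonal projection onto $\operatorname{range}(L)$, and set $X_i = L^{+/2} Y_i\, L^{+/2}$. Then $\mathbb{E}\bigl[\sum_{i=1}^q X_i\bigr] = \Pi$, and the desired inequality $(1-\epsilon) x^T L x \le x^T \tilde{L} x \le (1+\epsilon) x^T L x$ for all $x \in \mathbb{R}^n$ is equivalent (on $\operatorname{range}(L)$, with both Laplacians vanishing on the constant vector) to $\bigl\|\sum_i X_i - \Pi\bigr\| \le \epsilon$.

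The key step is bounding $\|X_i\|$. Recall the identity $w_e R_e = \|L^{+/2} b_e\|_2^2$, and, for a connected graph, $\sum_{e} w_e R_e = n-1$. Since the densification step in Section~\ref{subsec:densification_instance} assigns a uniform weight $w = |E|/|E_l|$ to every edge of $G_l$, the normalizing constant of the sampling distribution equals $Z = \sum_{e} (1+S_e) R_e \le 2 \sum_e R_e = 2(n-1)/w$, using $S_e \in [0,1]$. Hence
\[
\|X_i\| \;=\; \frac{w\, R_e}{q\, p_e} \;=\; \frac{w\, Z}{q\,(1+S_e)} \;\le\; \frac{2(n-1)}{q}.
\]
With this uniform bound on $\|X_i\|$ and $\mathbb{E}[\sum_i X_i] = \Pi$, Rudelson's concentration inequality for sums of i.i.d.\ rank-one PSD matrices (equivalently, the matrix Chernoff bound of Tropp) yields $\bigl\|\sum_i X_i - \Pi\bigr\| \le \epsilon$ with high probability whenever $q = C\, n \log n / \epsilon^2$ for a suitable absolute constant $C$. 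Pulling this back through the $L^{+/2}$ similarity gives the claimed spectral bound.

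The main obstacle is the $(1+S_e)$ factor, which prevents the clean cancellation $\|X_i\| = 1/q$ one obtains with the standard choice $p_e \propto w_e R_e$. Since $S_e$ lies in $[0,1]$, this factor only inflates both the normalizer $Z$ and the per-sample norm bound by multiplicative constants, which are absorbed into the $O(\cdot)$ in the sample complexity; all other ingredients mirror the proof of Theorem~\ref{thm:uss}, so the rest of the argument is routine.
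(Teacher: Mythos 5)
Your proposal is correct and follows essentially the same route as the paper's proof: both are the Spielman--Srivastava argument, reducing the claim to an operator-norm deviation of an i.i.d.\ sum of rank-one matrices from a projection, handling the $(1+S_e)$ factor via $S_e \in [0,1]$ to obtain the same $2(n-1)$ norm bound, and invoking Rudelson's concentration lemma to get $q = O(n\log n/\epsilon^2)$. The only difference is cosmetic --- you conjugate by $L^{+/2}$ in node space while the paper works with the edge-space projection $\Phi = C L^{+} C^{T}$ --- and your explicit treatment of the uniform weight $w = |E|/|E_l|$ is, if anything, slightly more careful than the paper's.
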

\begin{proof}
Proof is deferred to Appendix~\ref{proof:thm:iss}.
\end{proof}

\subsection{Complexity Analysis}
\label{subsec:complexity_analysis}

\ub{Approximating ER.} 
Computing ER exactly is computationally intensive, but there are efficient approximation methods in sublinear \( O(m \, \text{poly}(\log n/\delta^2)) \) time~\citep{DBLP:journals/siamcomp/KoutisMP14, DBLP:conf/kdd/0001LYG21}, where \( 0 < \delta < 1 \) is the approximation error. 
We use the technique by \citet{DBLP:journals/siamcomp/KoutisMP14}. 
The approximation error affects the bound by a constant factor, altering the bound in Theorem~\ref{thm:iss} to \( (1 \pm (1 + \delta)\epsilon) \). We fix $\delta = 0.1$ throughout our experiments.

\ub{Time complexity.}
The time complexity of GOKU has two main components: graph densification and graph sparsification. Denote $|E|=m$ and $|V|=n$. In graph densification, generating approximately \( 16|E'| \) candidate edges and computing their scores requires \( O(|E'|m) \). In graph sparsification, approximating the ER for all edges incurs \( O\left(\frac{m \cdot \text{poly}(\log n)}{\delta^2}\right) \), and the ISS sampling step adds \( O\left(\frac{n \log n}{\epsilon^2}\right) \). Combining these, the overall time complexity is:
$
O(|E'|m) + O\left(\frac{m \cdot \text{poly}(\log n)}{\delta^2}\right) + O\left(\frac{n \log n}{\epsilon^2}\right) = O(|E'|m) + \tilde{O}\left(\frac{m}{\delta^2}\right)$
where \( \tilde{O} \) omits polylogarithmic factors. The Fiedler and leading vectors can be approximated in linear time, dominated by the other terms.



\section{Experiments}\label{sec:experiment}

\begin{table*}[h!]
\centering
\small 
\setlength{\tabcolsep}{4.7pt} 
\renewcommand{\arraystretch}{0.8} 
\begin{tabular}{lccccccccccc}
\toprule
\rowcolor[HTML]{EFEFEF}
\textbf{Method} & \textbf{Cora} & \textbf{Citeseer} & \textbf{Texas} & \textbf{Cornell} & \textbf{Wisconsin} & \textbf{Chameleon} & \textbf{Enzymes} & \textbf{Imdb} & \textbf{Mutag} & \textbf{Proteins} & \textbf{AR} \\
\midrule
\rowcolor[HTML]{F9F9F9}
\textbf{NONE}  & 86.7\tiny{ ± 0.3} & 72.3\tiny{ ± 0.3} & 44.2\tiny{ ± 1.5} & 41.5\tiny{ ± 1.8} & 44.6\tiny{ ± 1.4} & 59.2\tiny{ ± 0.6} & 25.5\tiny{ ± 1.3} & 49.3\tiny{ ± 1.0} & 68.8\tiny{ ± 2.1} & 70.6\tiny{ ± 1.0} & 6.60 \\
\textbf{SDRF}  & 86.3\tiny{ ± 0.3} & 72.6\tiny{ ± 0.3} & 43.9\tiny{ ± 1.6} & 42.2\tiny{ ± 1.5} & 46.2\tiny{ ± 1.2} & 59.4\tiny{ ± 0.5} & 26.1\tiny{ ± 1.1} & 49.1\tiny{ ± 0.9} & 70.5\tiny{ ± 2.1} & 71.4\tiny{ ± 0.8} & 5.70 \\
\rowcolor[HTML]{F9F9F9}
\textbf{FOSR}  & 85.9\tiny{ ± 0.3} & 72.3\tiny{ ± 0.3} & 46.0\tiny{ ± 1.6} & 40.2\tiny{ ± 1.6} & 48.3\tiny{ ± 1.3} & 59.3\tiny{ ± 0.6} & 27.4\tiny{ ± 1.1} & 49.6\tiny{ ± 0.8} & 75.6\tiny{ ± 1.7} & \cellcolor[HTML]{CCFFCC}72.3\tiny{ ± 0.9} & 4.80 \\
\textbf{BORF}  & \cellcolor[HTML]{FFFF99}87.5\tiny{ ± 0.2} & \cellcolor[HTML]{FFFF99}73.8\tiny{ ± 0.2} & 49.4\tiny{ ± 1.2} & 50.8\tiny{ ± 1.1} & 50.3\tiny{ ± 0.9} & \cellcolor[HTML]{CCFFCC}61.5\tiny{ ± 0.4} & 24.7\tiny{ ± 1.0} & \cellcolor[HTML]{CCFFCC}50.1\tiny{ ± 0.9} & 75.8\tiny{ ± 1.9} & 71.0\tiny{ ± 0.8} & 3.40 \\
\rowcolor[HTML]{F9F9F9}
\textbf{DR}    & 78.4\tiny{ ± 1.2} & 69.5\tiny{ ± 1.6} & \cellcolor[HTML]{CCFFCC}67.8\tiny{ ± 2.5} & \cellcolor[HTML]{CCFFCC}57.8\tiny{ ± 1.9} & \cellcolor[HTML]{CCFFCC}62.8\tiny{ ± 2.1} & 58.6\tiny{ ± 0.8} & - & 47.0\tiny{ ± 0.7} & \cellcolor[HTML]{CCFFCC}80.1\tiny{ ± 1.8} & 72.2\tiny{ ± 0.8} & 4.55 \\
\textbf{GTR}   & \cellcolor[HTML]{CCFFCC}87.3\tiny{ ± 0.4} & 72.4\tiny{ ± 0.3} & 45.9\tiny{ ± 1.9} & 50.8\tiny{ ± 1.6} & 46.7\tiny{ ± 1.5} & 57.6\tiny{ ± 0.8} & 27.4\tiny{ ± 1.1} & 49.5\tiny{ ± 1.0} & 78.9\tiny{ ± 1.8} & \cellcolor[HTML]{FFFF99}72.4\tiny{ ± 1.2} & 3.90 \\

\textbf{LASER}   & 86.9\tiny{ ± 1.1} & 72.6\tiny{ ± 0.6} & 45.9\tiny{ ± 2.6} & 42.7\tiny{ ± 2.6} & 46.0\tiny{ ± 2.6} & 43.5\tiny{ ± 1.0} & \cellcolor[HTML]{FFFF99}27.6\tiny{ ± 1.3} & \cellcolor[HTML]{FFFF99}50.3\tiny{ ± 1.3} & 78.8\tiny{ ± 1.6} & 71.8\tiny{ ± 1.6} & 4.20 \\

\rowcolor[HTML]{F9F9F9}
\textbf{GOKU}  & 86.8\tiny{ ± 0.3} & \cellcolor[HTML]{CCFFCC}73.6\tiny{ ± 0.2} & \cellcolor[HTML]{FFFF99}72.4\tiny{ ± 2.2} & \cellcolor[HTML]{FFFF99}69.4\tiny{ ± 2.1} & \cellcolor[HTML]{FFFF99}68.8\tiny{ ± 1.4} & \cellcolor[HTML]{FFFF99}63.2\tiny{ ± 0.4} & \cellcolor[HTML]{FFFF99}27.6\tiny{ ± 1.2} & 49.8\tiny{ ± 0.7} & \cellcolor[HTML]{FFFF99}81.0\tiny{ ± 2.0} & 71.9\tiny{ ± 0.8} & 1.90 \\
\bottomrule
\end{tabular}
\vspace{-2mm}
\caption{Performance of different methods for both node and graph classification datasets using \textbf{GCN} as the model. The best and runner-up results are highlighted in \textcolor{yellow}{yellow} and \textcolor{green}{green}, respectively. The average ranking (AR) reflects the mean position of each method across all datasets. -: DR requires graphs to have at least $3$ nodes, but some graphs from Enzymes have only $2$ nodes.}
\vspace{-3mm}
\label{tab:performance_comparison_gcn}
\end{table*}

\begin{table*}[h!]
\centering
\small 
\setlength{\tabcolsep}{4.7pt} 
\renewcommand{\arraystretch}{0.8} 

\begin{tabular}{lccccccccccc}
\toprule
\rowcolor[HTML]{EFEFEF} 
\textbf{Method} & \textbf{Cora} & \textbf{Citeseer} & \textbf{Texas} & \textbf{Cornell} & \textbf{Wisconsin} & \textbf{Chameleon} & \textbf{Enzymes} & \textbf{Imdb} & \textbf{Mutag} & \textbf{Proteins} & \textbf{AR} \\
\midrule
\rowcolor[HTML]{F9F9F9}
\textbf{NONE}  & 77.5\tiny{ ± 1.2} & 59.3\tiny{ ± 1.3} & 46.4\tiny{ ± 4.5} & 40.2\tiny{ ± 4.8} & 42.1\tiny{ ± 3.6} & 56.6\tiny{ ± 0.8} & 33.5\tiny{ ± 1.3} & 67.7\tiny{ ± 1.4} & 76.1\tiny{ ± 3.1} & 69.5\tiny{ ± 1.4} & 6.20 \\
\textbf{SDRF}  & 77.2\tiny{ ± 0.8} & 59.9\tiny{ ± 1.2} & 44.5\tiny{ ± 3.9} & 38.2\tiny{ ± 4.1} & 43.1\tiny{ ± 1.6} & 58.1\tiny{ ± 1.4} & 32.4\tiny{ ± 1.3} & 69.4\tiny{ ± 1.4} & \cellcolor[HTML]{FFFF99}79.5\tiny{ ± 2.6} & 71.4\tiny{ ± 0.8} & 5.80 \\
\rowcolor[HTML]{F9F9F9}
\textbf{FOSR}  & 78.2\tiny{ ± 0.8} & 61.4\tiny{ ± 1.5} & 43.7\tiny{ ± 4.1} & 39.4\tiny{ ± 3.8} & 43.7\tiny{ ± 2.8} & 59.3\tiny{ ± 0.6} & 28.8\tiny{ ± 1.0} & \cellcolor[HTML]{CCFFCC}70.6\tiny{ ± 1.3} & 74.8\tiny{ ± 1.5} & 73.7\tiny{ ± 0.8} & 5.00 \\
\textbf{BORF}  & 77.6\tiny{ ± 0.9} & 60.8\tiny{ ± 0.2} & 49.9\tiny{ ± 3.4} & 39.9\tiny{ ± 3.9} & 46.7\tiny{ ± 2.2} & 59.2\tiny{ ± 0.4} & 31.4\tiny{ ± 1.5} & 70.5\tiny{ ± 1.3} & 78.2\tiny{ ± 1.6} & 71.9\tiny{ ± 1.3} & 4.40 \\
\rowcolor[HTML]{F9F9F9}
\textbf{DR}    & 64.6\tiny{ ± 1.6} & 50.8\tiny{ ± 2.0} & \cellcolor[HTML]{CCFFCC}57.3\tiny{ ± 2.8} & \cellcolor[HTML]{FFFF99}50.1\tiny{ ± 2.7} & \cellcolor[HTML]{CCFFCC}50.1\tiny{ ± 3.0} & \cellcolor[HTML]{CCFFCC}60.2\tiny{ ± 0.8} & - & 64.8\tiny{ ± 0.8} & 74.5\tiny{ ± 2.8} & \cellcolor[HTML]{FFFF99}74.3\tiny{ ± 0.8} & 4.44 \\
\textbf{GTR}   & \cellcolor[HTML]{CCFFCC}78.6\tiny{ ± 1.3} & 62.6\tiny{ ± 1.9} & 49.5\tiny{ ± 2.9} & 39.4\tiny{ ± 2.3} & 44.2\tiny{ ± 2.4} & 57.1\tiny{ ± 1.2} & 28.4\tiny{ ± 1.8} & 70.1\tiny{ ± 1.2} & \cellcolor[HTML]{CCFFCC}78.5\tiny{ ± 3.5} & 73.3\tiny{ ± 0.9} & 4.40 \\

\rowcolor[HTML]{F9F9F9}
\textbf{LASER}   & \cellcolor[HTML]{FFFF99}79.1\tiny{ ± 1.0} & \cellcolor[HTML]{FFFF99}66.5\tiny{ ± 1.3} & 46.5\tiny{ ± 4.5} & 44.5\tiny{ ± 3.8} & 46.1\tiny{ ± 4.6} & 59.8\tiny{ ± 2.2} & \cellcolor[HTML]{FFFF99}35.3\tiny{ ± 1.3} & 68.6\tiny{ ± 1.2} & 76.1\tiny{ ± 2.4} & 72.1\tiny{ ± 0.7} & 3.40 \\

\textbf{GOKU}  & 78.4\tiny{ ± 0.5} & \cellcolor[HTML]{CCFFCC}63.6\tiny{ ± 1.3} & \cellcolor[HTML]{FFFF99}60.2\tiny{ ± 2.6} & \cellcolor[HTML]{CCFFCC}49.5\tiny{ ± 3.5} & \cellcolor[HTML]{FFFF99}57.6\tiny{ ± 3.1} & \cellcolor[HTML]{FFFF99}62.1\tiny{ ± 0.6} & \cellcolor[HTML]{CCFFCC}33.8\tiny{ ± 1.2} & \cellcolor[HTML]{FFFF99}71.3\tiny{ ± 0.9} & 78.4\tiny{ ± 2.5} & \cellcolor[HTML]{CCFFCC}73.9\tiny{ ± 1.0} & 1.80 \\
\bottomrule
\end{tabular}

\caption{Performance of different methods for both node and graph classification datasets using \textbf{GIN} as the model.}

\label{tab:performance_comparison_gin}          
\end{table*}

In this section, we evaluate the proposed method \textsc{GOKU} across various tasks, benchmarking its performance against state-of-the-art methods for node and graph classification. Our analysis is guided by the following research questions:
\begin{itemize}[topsep=0pt, partopsep=0pt, itemsep=0.5pt, parsep=0pt,leftmargin=*]
   \item \textbf{RQ1: Performance.} How does \textsc{GOKU} compare to leading methods in node/graph classification tasks?
   \item \textbf{RQ2: Spectrum preservation.} How well does \textsc{GOKU} preserve the graph spectrum?
   \item \textbf{RQ3: Ablation study.} What are the contributions of densification and sparsification to \textsc{GOKU}'s effectiveness?
   \item \textbf{RQ4: Structural impact and efficiency of GOKU.} How does GOKU improve the structure (homophily and connectivity) of graphs and how efficient is GOKU?
\end{itemize}

\ub{Datasets.}
We evaluate our method on $10$ widely used datasets for node/graph classification. 
For node classification, we use Cora, Citeseer \cite{DBLP:conf/icml/YangCS16}, Texas, Cornell, Wisconsin \cite{DBLP:conf/iclr/PeiWCLY20}, and Chameleon \cite{DBLP:journals/compnet/RozemberczkiAS21}, including both homophilic and heterophilic datasets.
For graph classification, we use Enzymes, Imdb, Mutag, and Proteins from TUDataset \cite{DBLP:conf/nips/MorrisABBHLG19}.
Dataset statistics are summarized in Appendix \ref{appendix:dataset}.

\ub{Experimental details.}
We compare the proposed method GOKU with no graph rewiring (i.e., NONE) and six state-of-the-art rewiring methods: SDRF~\cite{DBLP:conf/iclr/ToppingGC0B22}, FoSR~\cite{DBLP:conf/iclr/KarhadkarBM23}, BORF~\cite{DBLP:conf/icml/NguyenHNHON23}, GTR~\cite{DBLP:conf/icml/BlackWN023}, Delaunay Rewiring (DR)~\cite{DBLP:conf/icml/AttaliBP24}, and LASER~\cite{DBLP:conf/iclr/BarberoVSBG24}. Each method preprocesses graphs before evaluation with a GCN\footnote{$
h_v^{(l+1)} = \sigma\left( \sum_{u \in N(v)} \frac{w_{uv}}{\sqrt{d_u \cdot d_v}} h_u^{(l)} W^{(l)} \right)
$
}~\cite{DBLP:conf/iclr/KipfW17}, GIN\footnote{$
h_v^{(l+1)} = \text{MLP}^{(l)}\left((1 + \epsilon) \cdot h_v^{(l)} + \sum_{u \in N(v)} w_{uv} \cdot h_u^{(l)} \right)
$
}~\cite{DBLP:conf/iclr/XuHLJ19}, and GCNII~\cite{DBLP:conf/icml/ChenWHDL20} (refer to Appendix~\ref{app:gcnii}). 
Following~\citet{DBLP:conf/icml/NguyenHNHON23}, we prioritize fairness and comprehensiveness over maximizing performance on individual datasets, applying fixed GNN hyperparameters (e.g., learning rate $1e-3$, hidden dimension $64$, 4 layers) across all methods. Hyperparameters for rewiring methods are tuned individually. The configuration that yields the best validation performance is selected and tested. Results are averaged over 100 random trials with both the mean test accuracy and the 95\% confidence interval reported.

\begin{table}[t]
\centering
\setlength{\tabcolsep}{1.4pt} 
\renewcommand{\arraystretch}{1.0} 
\small 
\begin{tabular}{>{\bfseries}l C{1.25cm} C{1.25cm} C{1.25cm} C{1.5cm} C{1.25cm}}
\toprule
\rowcolor[HTML]{EFEFEF} 
\textbf{Method} & \textbf{Mutag} & \textbf{Imdb} & \textbf{Proteins} & \textbf{Chameleon} & \textbf{Cora} \\
\midrule
GOKU-D        & 78.0\tiny{ ± 1.8} & 48.2\tiny{ ± 1.1} & \textbf{72.0}\tiny{ ± 0.9} & 62.7\tiny{ ± 0.7}                 & 86.4\tiny{ ± 0.7}                 \\
\rowcolor[HTML]{F9F9F9} 
GOKU-S        & 76.8\tiny{ ± 1.9}                 & 48.8\tiny{ ± 1.1}                  & 70.1\tiny{ ± 1.9}                  & 62.9\tiny{ ± 1.5}                 & 86.4\tiny{ ± 0.6}                             \\
GOKU          & \textbf{81.0}\tiny{ ± 2.0}  & \textbf{49.8}\tiny{ ± 0.7}  & 71.9\tiny{ ± 0.8}  & \textbf{63.2}\tiny{ ± 0.4}                  & \textbf{86.8}\tiny{ ± 0.3}                  \\
\bottomrule
\end{tabular}
\vspace{-1mm}
\caption{Ablation study. GOKU-D and GOKU-S correspond to densification and sparsification only.}
\label{tab:ablation_study}
\end{table}

\ub{Hyperparameters of GOKU.} We fix the spectrum approximation error $\epsilon=0.1$, which determines the value of $q$ as \( q = \frac{\kappa^2}{2\epsilon^2} \log 8 \) (see Theorem~\ref{thm:uss}). For the ER approximation algorithm~\cite{DBLP:journals/siamcomp/KoutisMP14}, we set $\delta=0.1$. 
Two hyperparameters are fine-tuned based on the validation set:  
    \textbf{(1; densification)} $\alpha \in \{5,10,15,20,25,30\}$: If the number of edges added during densification computed with the initial $\epsilon=0.1$, is less than $\alpha$, we iteratively increase $\epsilon$ until this threshold is exceeded.  
     \textbf{(2; sparsification)} $\beta \in [0.5, 1.0]$: This parameter scales the size of the output graph $G_o$ relative to the input graph $G$, such that \(|E_o| = \beta|E|\). During the sparsification phase, we keep sampling until $\beta|E|$ distinct edges are sampled.

\ub{Performance results.}
Table~\ref{tab:performance_comparison_gcn} and Table~\ref{tab:performance_comparison_gin} summarize the results for GOKU and baselines using GCN and GIN as backbone models, respectively (see Appendix~\ref{app:gcnii} for the results using GCNII).
GOKU achieves superior performance across both node and graph classification tasks when using GCN as the backbone model, especially in the node classification task for heterophilic graphs (such as Texas, Cornell, and Wisconsin).
Similarly, with GIN as the backbone, GOKU maintains its strong performance, outperforming baselines in most datasets.
Overall, GOKU consistently outperforms other methods, demonstrating its effectiveness across diverse tasks and graph types.

\ub{Visualization of graph spectra.} We visualize the spectra of randomly selected graphs from the Mutag and Proteins datasets before and after rewiring using GTR, DR, and GOKU. For a fair comparison, 10 edges are added to Mutag and 50 to the other datasets ($\alpha=10$ or $\alpha=50$ for GOKU). For LASER, we set $p=0.15$ and maximum hop $k=3$ to add a similar number of edges. As shown in Figure~\ref{fig:spectra_violin}, GTR and DR cause significant eigenvalue deviations, while GOKU best preserves the spectra of the original graphs, even outperforming LASER.
This highlights GOKU's ability to maintain spectral fidelity.

\begin{table}[t]
\small
\centering
\setlength{\tabcolsep}{4.5pt} 
\begin{tabular}{c|c|c|c|c}
\hline
\rowcolor[HTML]{F2F2F2} \textbf{Dataset} & \textbf{\# Edges} & \textbf{Homo. (bef.)} & \textbf{Homo. (aft.)} & \textbf{Time (sec)} \\
\hline
S-High & 20 & 0.8333 & 0.9167 & 0.086 \\
S-Low  & 20 & 0.4000 & 0.4123 & 0.072 \\
\hline
M-High & 245 & 0.7459 & 0.7764 & 0.42 \\
M-Low  & 245 & 0.3452 & 0.3822 & 0.36 \\
\hline
L-High & 80352 & 0.7558 & 0.7822 & 65.26 \\
L-Low  & 80352 & 0.3854 & 0.4325 & 72.24 \\
\hline
\end{tabular}
\vspace{-1mm}
\caption{(1) Homophily levels before and after applying the GOKU rewiring method, and (2) the average running time (10 runs) for GOKU rewiring across graphs of varying scales.}
\label{tab:time&homo}
\end{table}

\ub{Ablation study.} We provide ablation study in Table~\ref{tab:ablation_study}. The results demonstrate that both densification and sparsification contribute to the effectiveness of GOKU. While GOKU-D (densification only) enhances connectivity more significantly than GOKU, this does not always translate to better performance for downstream tasks. Excessive connectivity can lead to issues such as over-smoothing or disrupting the underlying community structures. On the other hand, GOKU-S (sparsification only) alone fails to achieve significant improvements in connectivity. This underscores the importance of combining both components to strike a balance between connectivity and density.

\begin{figure}[t]
    \centering
    \setlength{\abovecaptionskip}{-0.5pt}  
    \includegraphics[width=1.0\linewidth]{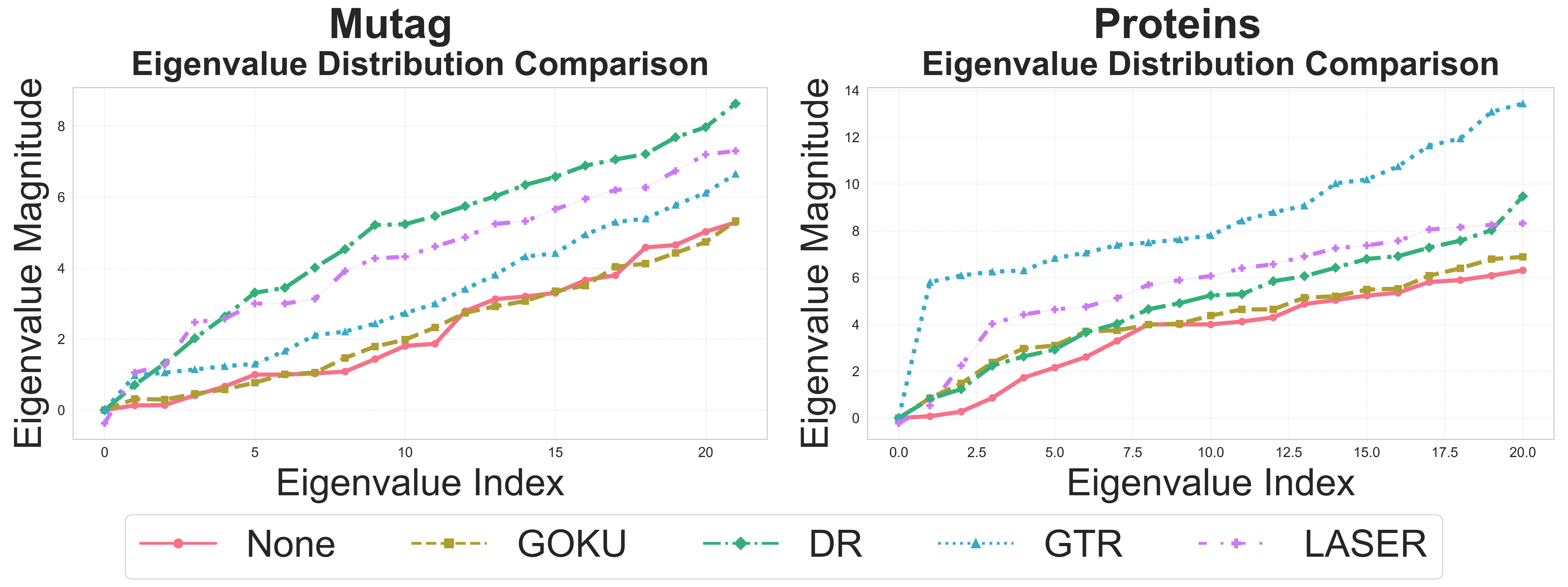}

    \caption{Graph spectra of Mutag and Proteins before and after rewiring. See more examples in Appendix~\ref{appendix:visualization}.
    }
    \vspace{-1mm}
    \label{fig:spectra_violin}
\end{figure}

\ub{Homophily level, effective resistance, and 
running time.
}
We generate graphs at varying scales (20, 200, and 2000 nodes) with different homophily levels (high and low) using 
the Stochastic Block Model (SBM)
to investigate the impact of GOKU. Specifically, we compare homophily levels and effective resistance distributions before and after the GOKU rewiring technique is applied to each graph. In addition, we measure the running time of GOKU across different graph sizes to evaluate its computational efficiency as the graph scale increases. For the experiment, we fix \( \alpha = 0.1|E| \) and \( \beta = 1.0 \), meaning that during densification, we add \( 0.1|E| \) edges, and the density of the rewired graphs remains the same as that of the original graphs.

\begin{itemize}
    \item \textbf{Homophily:} 
    The homophily values of the original and rewired graphs are given in Table~\ref{tab:time&homo}.
    Note that GOKU consistently improves homophily levels\footnote{\textit{Homophily level} is defined as  $\frac{|\{(u,v) \in E : y_u = y_v\}|}{|E|}$.} across different graph scales and homophily levels. This improvement is due to the consideration of node features during sparsification, which helps preserve intra-community edges. 
    
    \item \textbf{Effective resistance:} After applying GOKU, the effective resistances between node pairs significantly decrease, as shown in the distributions in Figure~\ref{fig:effective_resistance}. 
    This reduction indicates improvements in both local and global connectivity
    in the rewired graphs.

    \item \textbf{Running time:} The running time of GOKU scales near-linearly with the graph size,  as shown in Table~\ref{tab:time&homo}, which aligns with the theoretical time complexity in Section~\ref{subsec:complexity_analysis}.   
\end{itemize}
\begin{figure}[t]
    \centering
    \includegraphics[width=1.0\linewidth]{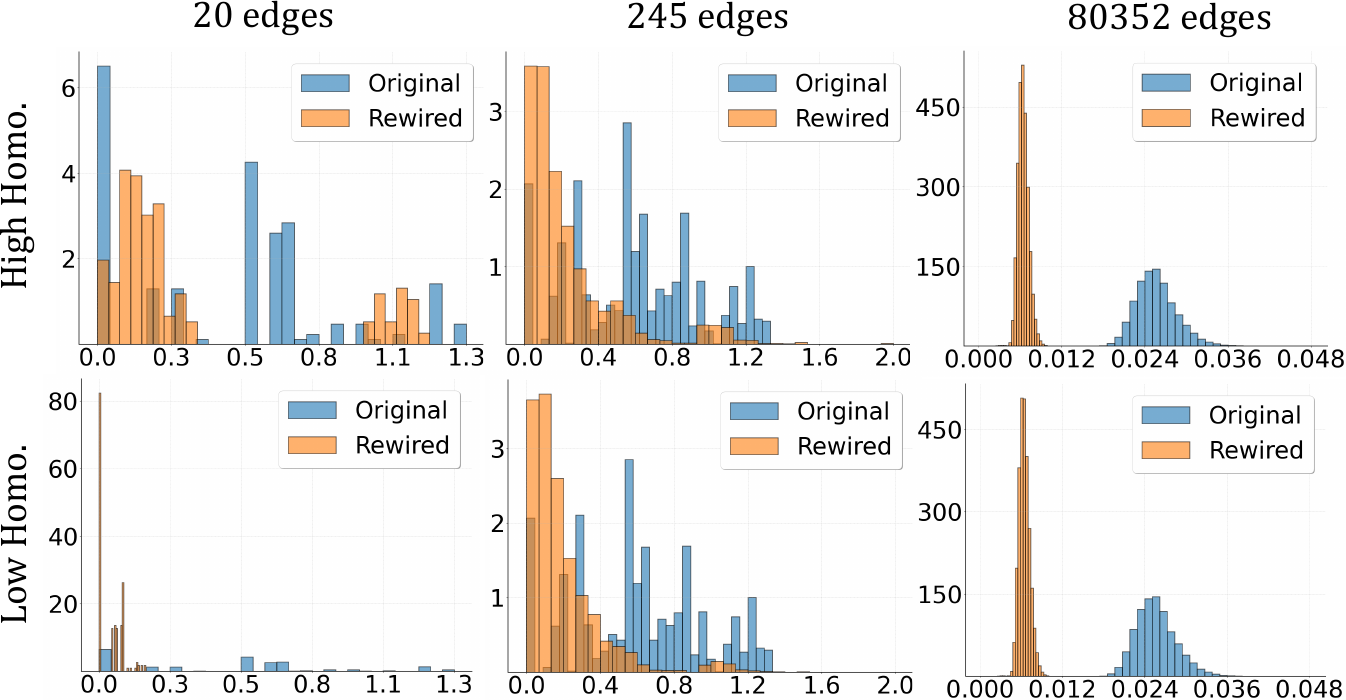}
    \caption{Effective resistance (ER) distribution of all node pairs in graphs with varying scales and homophily levels before and after rewiring. \textbf{Smaller} ER values suggest \textbf{better} connectivity.}
    \label{fig:effective_resistance}
\end{figure}

\vspace{-3mm}
\section{Conclusions and Limitations}\label{sec:conclusion}
\vspace{-1.2mm}
In this paper, we propose GOKU, a novel graph rewiring method for addressing over-squashing in GNNs through the densification-sparsification paradigm. 
Building on spectral sparsification, we formulate and solve an inverse sparsification problem to enhance graph connectivity while preserving spectral properties, followed by a sparsification step for maintaining sparsity. Extensive experiments show that GOKU outperforms existing methods in node and graph classification tasks, while effectively balancing connectivity improvement and spectral retention.
Our code is available at \href{https://github.com/Jinx-byebye/GOKU}{https://github.com/Jinx-byebye/GOKU}.

\vspace{-1.2mm}
\ub{Limitations.}
Spectrum preservation for unweighted graphs with different edge sizes is inherently challenging, so spectral sparsification typically relies on weighted graphs, where edge weights may cause additional complexities for GNNs.
This is not a major concern in practice for most GNN models, where edge weights can be naturally incorporated.
\clearpage
\section*{Impact Statement}
This paper presents work whose goal is to advance the field of Machine Learning. There are many potential societal consequences of our work, none of which we feel must be specifically highlighted here.



\section*{Acknowledgements}
We thank the anonymous reviewers for their constructive feedback. This work was partly supported by the National Research Foundation of Korea (NRF) grant funded by the Korea government (MSIT) (No. RS-2024-00406985).
This work was partly supported by Institute of Information \& Communications Technology Planning \& Evaluation (IITP) grant funded by the Korea government (MSIT) (No. 2022-0-00871 / RS-2022-II220871, Development of AI Autonomy and Knowledge Enhancement for AI Agent Collaboration) 
(No. RS-2024-00438638, EntireDB2AI: Foundations and Software for Comprehensive Deep Representation Learning and Prediction on Entire Relational Databases) (No. RS-2019-II190075, Artificial Intelligence Graduate School Program (KAIST)).

\nocite{langley00}

\bibliography{example_paper}
\bibliographystyle{icml2025}

\clearpage
\appendix
\onecolumn

\section{Dataset}\label{appendix:dataset}

A summary of the statistics for all datasets used is presented in Table~\ref{table:node_dataset_stats} and Table\ref{table:graph_dataset_stats}.

\begin{table*}[h]
    \centering
    \begin{tabular}{lcccccc}
        \toprule
        \rowcolor[HTML]{D0E9F5} 
        & \textbf{Cornell} & \textbf{Texas} & \textbf{Wisconsin} & \textbf{Cora} & \textbf{Citeseer} & \textbf{Chameleon} \\
        
        \midrule
       
        \textbf{\#Nodes}        & 140 & 135 & 184 & 2485 & 2120 & 832 \\
        
        \textbf{\#Edges}        & 219 & 251 & 362 & 5069 & 3679 & 12355 \\

        \textbf{\#Features}     & 1703 & 1703 & 1703 & 1433 & 3703 & 2323 \\
        
        \textbf{\#Classes}      & 5 & 5 & 5 & 7 & 6 & 5 \\
        
        \midrule
         \rowcolor[HTML]{F2F2F2}
        \textbf{Directed Graph?} & YES & YES & YES & NO & NO & YES \\
          
        \bottomrule
    \end{tabular}
      \caption{Summary statistics of node classification datasets.}
      \label{table:node_dataset_stats}
\end{table*}

\begin{table*}[h]
    \centering
    \begin{tabular}{lcccc}
        \toprule
        \rowcolor[HTML]{D0E9F5} 
        & \textbf{ENZYMES} & \textbf{IMDB} & \textbf{MUTAG} & \textbf{PROTEINS} \\
        
        \midrule
        \rowcolor[HTML]{F2F2F2}
        \multicolumn{5}{l}{\textbf{Basic Info. of Graphs}} \\

        \textbf{Nodes (Min--Max)} & 2--126 & 12--136 & 10--28 & 4--620 \\
        
        \textbf{Edges (Min--Max)} & 2--298 & 52--2498 & 20--66 & 10--2098 \\

        \textbf{Avg \# of Nodes} & 32.63 & 19.77 & 17.93 & 39.06 \\
        
        \textbf{Avg \# of Edges} & 124.27 & 193.06 & 39.58 & 145.63 \\
        
        \midrule
        \rowcolor[HTML]{D0E9F5}
        \multicolumn{5}{l}{\textbf{Classification Info.}} 
        \\
          \textbf{\# of Graphs} & 600 & 1000 & 188 & 1113 \\
        \textbf{\# of Classes} & 6 & 2 & 2 & 2 \\
        
        \rowcolor[HTML]{F2F2F2}
        \textbf{Directed Graphs?} & NO & NO & NO & NO \\
        \bottomrule
    \end{tabular}
    \caption{Summary statistics of graph classification datasets.}
    \label{table:graph_dataset_stats}
\end{table*}

\begin{algorithm}[tb]
\caption{ISS Algorithm}
\label{alg:er_sparsify}
\begin{algorithmic}[1]
\STATE {\bfseries Input:} Graph \(G_l=(V,E)\) and sampling count \(q\)
\STATE {\bfseries Output:} Output graph \(G_o=(V,E_c)\)

\STATE Approximate ER \(R_e\) and compute original node feature cosine similarity \(S_e = \frac{1 + \text{cos}(x_u, x_v)}{2} \in [0,1]\) for each edge \(e \in E\).
\FOR{$i=1$ {\bfseries to} \(q\)}
    \STATE Randomly select edge \(e\) with probability \(p_e \propto (1+S_e)R_e\) with replacement
    \STATE Increment the edge weight by \(\frac{1}{p_e q}\) for each sampling
\ENDFOR
\STATE Construct the output graph \(G_o = (V, E_o)\) by including the sampled edges with their accumulated weights
\end{algorithmic}
\end{algorithm}

\section{The GOKU Algorithm Pseudocode}\label{appendix:goku}
We provide the pseudocodes for the ISS sparsification and GOKU in Algorithm~\ref{alg:er_sparsify} and Algorithm~\ref{pseudo_goku}, respectively.

\begin{algorithm}[thb]
\caption{\method, Graph Densification and Sparsification Algorithm}
\label{pseudo_goku}
\begin{algorithmic}[1]
 \STATE {\bfseries Input:} Input graph $G = (V, E)$
 \STATE {\bfseries Output:} Output graph $G_o = (V, E_o)$

\STATE
\COMMENT{Densification: Add $k$ Edges to $G$}
\STATE Compute $q = \frac{\kappa^2}{2 \epsilon^2} \log 8$ using the leading eigenvector and $\epsilon$; estimate $|E_l|$ as in Appendix \ref{appendix:deter_E_l} and set $k = |E_l| - |E|$.
\STATE Identify $2j$ nodes with the largest absolute Fiedler values and $2j$ nodes with the smallest degrees.
\STATE Form candidate edges between these nodes, size $\binom{4j}{2}$.
\STATE Initialize $S = \emptyset$ (set of selected edges)
\STATE Initialize $r = 0$ (sampling counter)
\WHILE{size of $S < k$}
    \STATE Randomly select an edge $e$ from the candidate set based on probability $p_e$ proportional to its contribution to the objective function~\eqref{eq:densification_approx}. 
    \IF{$e \notin S$}
        \STATE Add $e$ to $S$.
    \ENDIF
    \STATE Increment edge weight of $e$ by $\frac{1}{p_e}$
    \STATE Increment $r$ by 1.
\ENDWHILE
\STATE Construct latent graph $G_l = (V, E_l)=E \cup S$ by including edges in $S$ with their accumulated weights; scale edge weights in $E$ by $|E|$/$|E_l|$; scale edge weights in $S$ by $1/r$.

\STATE
\COMMENT{Sparsification: Reduce $E_l$ to $E_o$}
\FOR{each edge $e \in E_l$}
    \STATE Approximate effective resistance $R_e$ with approximation error $\delta=0.1$ using the technique proposed by~\citet{DBLP:journals/siamcomp/KoutisMP14}.
\ENDFOR
\FOR{$i = 1$ to $q$}
    \STATE Randomly sample edges with probability $p_e \propto (1 + S_e) R_e$, where $S_e$ and $R_e$ denote feature similarity and effective resistance of edge $e$, respectively.
    \STATE Increment edge weights by $\frac{1}{p_e q}$ for each sample.
\ENDFOR
\STATE Construct sparsified graph $G_o = (V, E_o)$ by including
the sampled edges with their accumulated weights.

\end{algorithmic}
\end{algorithm}

\section{Results with GCNII}\label{app:gcnii}
We conduct additional experiments using several of the strongest baselines from Table~\ref{tab:performance_comparison_gcn} and Table~\ref{tab:performance_comparison_gin}, alongside four benchmark datasets: Mutag, Proteins, Cora, and Cornell. In these experiments, we use the powerful GNN model, GCNII~\cite{DBLP:conf/icml/ChenWHDL20}, as the backbone. The results from these experiments provide valuable insights into the relative strengths of the baseline models and the GCNII backbone. The results in the Table~\ref{tab:gcnii_results} demonstrate the performance of four methods (DR, GTR, LASER, GOKU) on five datasets (Mutag, Proteins, Cora, Cornell, Wisconsin) using GCNII.
GOKU consistently achieves the best or near-best performance across most datasets.
\begin{table}[h!]
\centering
\setlength{\tabcolsep}{3.5pt} 
\renewcommand{\arraystretch}{1.5} 
\normalsize 
\begin{tabular}{>{\bfseries}l C{2cm} C{2cm} C{2cm} C{2cm} C{2cm} C{2cm}}
\toprule
\rowcolor[HTML]{EFEFEF} 
\textbf{Method} & \textbf{Mutag} & \textbf{Proteins} & \textbf{Cora} & \textbf{Cornell} & \textbf{Wisconsin} & \textbf{AR} \\
\midrule
DR              & \cellcolor[HTML]{CCFFCC}81.6\tiny{ ± 1.7} & 72.1\tiny{ ± 1.2} & 78.8\tiny{ ± 1.4} & \cellcolor[HTML]{CCFFCC}67.9\tiny{ ± 4.7} & \cellcolor[HTML]{FFFF99}70.5\tiny{ ± 3.9} & 2.6 \\    
\rowcolor[HTML]{F9F9F9} 
GTR             & 79.0\tiny{ ± 1.4} & \cellcolor[HTML]{CCFFCC}72.5\tiny{ ± 1.1} & 86.8\tiny{ ± 1.3} & 59.2\tiny{ ± 4.8} & 62.7\tiny{ ± 3.7} & 3.4 \\
LASER           & 80.3\tiny{ ± 1.6} & 73.0\tiny{ ± 1.0} & \cellcolor[HTML]{FFFF99}88.1\tiny{ ± 1.3} & 60.8\tiny{ ± 3.6} & 60.1\tiny{ ± 3.5} & 2.6 \\
GOKU            & \cellcolor[HTML]{FFFF99}81.8\tiny{ ± 1.4} & \cellcolor[HTML]{FFFF99}73.9\tiny{ ± 0.9} & \cellcolor[HTML]{CCFFCC}87.2\tiny{ ± 1.1} & \cellcolor[HTML]{FFFF99}69.4\tiny{ ± 4.4} & \cellcolor[HTML]{CCFFCC}68.8\tiny{ ± 3.6} & \textbf{1.4} \\
\bottomrule
\end{tabular}
\caption{Results on five datasets with \textbf{GCNII}. The best and runner-up results are highlighted in \textcolor{yellow}{yellow} and \textcolor{green}{green}, respectively. The average ranking (AR) reflects the mean position of each method across all datasets.}
\label{tab:gcnii_results}
\end{table}

\section{Proof}

\subsection{Proof of Theorem~\ref{thm:uss}}\label{proof:thm:uss}




\begin{proof}
    We aim to show that the Laplacian $\tilde L$ of the sparsified graph $G_s$ approximates the Laplacian $L$ of the original graph $G$ in the spectral norm, i.e.,
\[
(1 - \epsilon) L \preceq \tilde{L} \preceq (1 + \epsilon) L
\]
with high probability. Our approach is based on Chernoff bounds and variance control in the sampling process.

\textbf{Setup.}
Consider a graph $G = (V, E)$ with $n$ nodes and $m$ edges. For each edge $e = (u,v) \in E$, the weight $w_e = 1$. The Laplacian matrix $L \in \mathbb{R}^{n \times n}$ is defined as
\[
L = D - A
\]
where $D$ is the degree matrix and $A$ is the adjacency matrix. We sample $q$ edges from $E$ with replacement, where the probability of sampling an edge $e = (u, v)$ is denoted by \( p_e \), without specifying a particular distribution for \( p_e \) values. For each sampled edge, its contribution to the Laplacian is scaled by the inverse of its sampling probability. Let $\tilde{L}$ be the Laplacian of the sparsified graph. Then,
\[
\mathbb{E}[\tilde{L}] = L.
\]
We aim to control the spectral deviation between $\tilde{L}$ and $L$ using a concentration bound.

\textbf{Quadratic Form Decomposition.}
Let $x \in \mathbb{R}^n$ be an arbitrary vector. We analyze the quadratic form $x^T \tilde{L} x$, which is a sum of independent random variables:
\[
x^T \tilde{L} x = \sum_{i=1}^q X_i,
\]
where $X_i$ is the contribution of the $i$-th sampled edge to the quadratic form. More precisely, for each sampled edge $e = (u, v)$, we define:
\[
X_i = \frac{1}{q p_e} (x_u - x_v)^2.
\]
The random variables $X_i$ are independent and identically distributed, and the expectation of each $X_i$ is:
\[
\mathbb{E}[X_i] = \frac{1}{q} x^T L x = \frac{1}{q} \mu,
\]
where $\mu = x^T L x$ is the quadratic form of the original Laplacian.

\textbf{Bounding the Range of $X_i$.}
To apply concentration inequalities, we need to bound the range of the random variables $X_i$. The maximum possible value of $X_i$ occurs when the difference $(x_u - x_v)^2$ is maximal, i.e.,
\[
\max_{e = (u, v) \in E} (x_u - x_v)^2 = \|C x\|_\infty^2,
\]
where $C \in \mathbb{R}^{m \times n}$ is the signed incidence matrix of the graph. Therefore, each $X_i$ is bounded as:
\[
0 \leq X_i \leq \frac{1}{q p_{\text{min}}} \|C x\|_\infty^2,
\]
where \( p_{\text{min}} = \min_{e \in E} p_e \) represents the minimum sampling probability.

\textbf{Variance Bound.}
The variance of each $X_i$ is bounded as:
\[
\text{Var}(X_i) \leq \mathbb{E}[X_i^2] \leq \frac{1}{q^2 p_{\text{min}}^2} \|C x\|_\infty^4.
\]
Summing over all sampled edges, the total variance is:
\[
\text{Var}\left( \sum_{i=1}^q X_i \right) = q \cdot \text{Var}(X_1) \leq \frac{q \cdot \|C x\|_\infty^4}{q^2 p_{\text{min}}^2}.
\]
Therefore, the standard deviation is:
\[
\sigma \leq \frac{\|C x\|_\infty^2}{q^{1/2} p_{\text{min}}}.
\]

\textbf{Application of Hoeffding's Inequality.}
We now apply Hoeffding's inequality to bound the probability that the sum deviates from its expectation:
\[
\mathbb{P}\left( \left| \sum_{i=1}^q (X_i - \mathbb{E}[X_i]) \right| > t \right) \leq 2 \exp \left( \frac{-2 t^2}{\sum_{i=1}^q (b_i - a_i)^2} \right).
\]
Substituting the bounds on $X_i$, we get:
\[
\mathbb{P}\left( \left| \sum_{i=1}^q (X_i - \mathbb{E}[X_i]) \right| > t \right) \leq 2 \exp \left( \frac{-2 t^2 q p_{\text{min}}^2}{\|C x\|_\infty^4} \right).
\]

\textbf{Conclusion.}
Let $t = \epsilon \mu$ and $\kappa = \frac{\|C x\|_\infty^2}{p_{\text{min}}\|C x\|_2^2}$. Then the probability becomes:
\[
\mathbb{P} \left( |x^T \tilde{L} x - x^T L x| > \epsilon \mu \right) \leq 2 \exp \left( \frac{-2 \epsilon^2 q}{\kappa^2} \right).
\]
Thus, by choosing $q \geq \frac{\kappa^2}{2 \epsilon^2} \log 8$, we ensure that
\[
|x^T \tilde{L} x - x^T L x| \leq \epsilon x^T L x
\]
with probability at least $3/4$, which completes the proof.
\end{proof}

\subsection{Proof of Theorem~\ref{thm:iss}}\label{proof:thm:iss}
\begin{proof}
This proof builds upon the techniques introduced by \citet{DBLP:conf/stoc/SpielmanS08}.

Let \( C \in \mathbb{R}^{m \times n} \) be a signed incidence matrix defined such that \( C_{e,v} = 1 \) if \( v \) is the head of edge \( e \), \( C_{e,v} = -1 \) if \( v \) is the tail of edge \( e \), and \( C_{e,v} = 0 \) otherwise. Consequently, the Laplacian can be expressed as \( L = C^T C \). This allows us to denote the Laplacian of the sparsifier as \( \tilde{L} = C^T S C \), where \( S \) is a diagonal matrix with \( S_{e,e} = \frac{s_e}{qp_e} \), and \( s_e \) represents the number of times edge \( e \) is sampled.

Since \( G \) is a connected graph, the multiplicity of the eigenvalue \( 0 \) is \( 1 \), and we express the pseudoinverse of \( L \) as 

\[
L^+ = \sum_{i=2}^n \frac{1}{\lambda_i} u_i u_i^T.
\]

Using the pseudoinverse, the effective resistance between nodes \( u \) and \( v \) can be calculated as follows:

\begin{equation}\nonumber
    R_{uv} = (1_u - 1_v)^T L^+ (1_u - 1_v),
\end{equation}

where the vector \( 1_u \) is an indicator vector, with the \( u \)-th element equal to \( 1 \) and all other elements equal to \( 0 \). Based on this definition, we can represent the effective resistance matrix as 

\[
\Phi = C L^+ C^T,
\]

where the diagonal entries are given by \( \Phi_{e,e} = R_e \). Next, we introduce the following lemma:

\begin{lemma}\label{lemma:ref}
Let \( p \) be a probability distribution over \( \Gamma \subseteq \mathbb{R}^d \) such that for all \( y \in \Omega \), \( y \leq \tau \), and \( \|\mathbb{E}_{p}[yy^T]\| \leq 1 \). Let \( y_1, y_2, \ldots, y_q \) be independent and identically distributed samples drawn from \( p \). Then

\begin{equation}\nonumber
    \mathbb{E} \left\| \frac{1}{q} \sum_{i=1}^q y_i y_i^T - \mathbb{E} yy^T \right\| \leq \min\left(c \tau \sqrt{\frac{\log q}{q}}, 1\right),
\end{equation}

where \( c \) is a constant.
\end{lemma}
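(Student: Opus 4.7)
The plan is to follow the classical symmetrization plus non-commutative Khintchine (Rudelson) approach for concentration of empirical second-moment matrices in operator norm. Write $E := \mathbb{E}\bigl\|\frac{1}{q}\sum_{i=1}^q y_i y_i^T - \mathbb{E}[yy^T]\bigr\|$ for the target quantity; the goal is to bound $E$ by $c\tau\sqrt{\log q/q}$, with the $\min$ with $1$ handled separately.

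The first step is a standard symmetrization. I would introduce an independent copy $(y_i')$ of the sample and i.i.d.\ Rademacher signs $(\epsilon_i)$. Using the identity $\mathbb{E}[yy^T] = \mathbb{E}[y_i'(y_i')^T \mid y_i]$, Jensen's inequality, and the fact that $y_i y_i^T - y_i'(y_i')^T$ is equal in distribution to $\epsilon_i(y_i y_i^T - y_i'(y_i')^T)$, one obtains
$$E \;\leq\; \frac{2}{q}\,\mathbb{E}\left\|\sum_{i=1}^q \epsilon_i\, y_i y_i^T\right\|.$$
This reduces the problem to controlling the operator norm of a Rademacher sum of rank-one matrices.

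The second step is to apply Rudelson's non-commutative Khintchine inequality conditionally on the $y_i$'s, which yields
$$\mathbb{E}_\epsilon\left\|\sum_i \epsilon_i\, y_i y_i^T\right\| \;\leq\; c_1\sqrt{\log q}\,\max_i \|y_i\|\,\left\|\sum_i y_i y_i^T\right\|^{1/2}.$$
Taking expectation over the $y_i$'s, using the boundedness $\|y_i\|\leq \tau$, Jensen's inequality to move the outer $\mathbb{E}$ inside the concave square root, and the triangle inequality $\bigl\|\frac{1}{q}\sum_i y_i y_i^T\bigr\| \leq E + \|\mathbb{E}[yy^T]\| \leq E + 1$, I arrive at the self-bounded inequality
$$E \;\leq\; \frac{c_2\,\tau\sqrt{\log q}}{\sqrt{q}}\,\sqrt{E + 1}.$$

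The third step is algebraic. Setting $a := c_2\tau\sqrt{\log q/q}$, the inequality $E \leq a\sqrt{E+1}$ rearranges to a quadratic in $E$ whose solution gives $E \leq 2a$ whenever $a \leq 1$, establishing the bound $E \leq c\tau\sqrt{\log q/q}$ in this regime. When $a > 1$, the concentration bound is vacuous, but the trivial control $E \leq 1$ follows (up to absorbing a constant into $c$) from the hypothesis $\|\mathbb{E}[yy^T]\|\leq 1$ combined with operator-norm monotonicity and normalization of $y$. Taking the better of the two bounds yields the $\min(c\tau\sqrt{\log q/q}, 1)$ form.

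The main obstacle is the correct invocation of Rudelson's inequality with its sharp $\sqrt{\log q}$ factor, followed by executing the conditioning-then-Jensen step that converts a bound involving $\|\sum_i y_i y_i^T\|^{1/2}$ into one involving $\sqrt{E+1}$; this self-bounding trick is where the geometry of the problem actually enters. Once that recursion is established, solving the quadratic and stitching together the two regimes is routine.
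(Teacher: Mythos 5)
The paper does not actually prove this lemma: it is imported as a black box (it is Rudelson--Vershynin's Theorem~3.1, restated as in Spielman--Srivastava), so there is no in-paper argument to compare yours against. Your proposal is the standard proof of that imported result --- symmetrization to reduce to a Rademacher sum, Rudelson's non-commutative Khintchine inequality applied conditionally on the sample, and the self-bounding inequality \(E \le a\sqrt{E+1}\) with \(a = c_2\tau\sqrt{\log q / q}\) solved as a quadratic --- and those three steps are assembled correctly; in the regime \(a \lesssim 1\) they do yield \(E \le c\tau\sqrt{\log q / q}\), which is the only regime in which the paper ever invokes the lemma (it chooses \(q = 16c^2 n\log n/\epsilon^2\) precisely so that the first term of the min is the active one).

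The one step that does not hold up as written is your handling of the \(\min(\cdot,1)\) clause. The claim that ``the trivial control \(E \le 1\) follows from \(\|\mathbb{E}[yy^T]\|\le 1\) combined with operator-norm monotonicity'' is false: the triangle inequality only gives \(\bigl\|\tfrac1q\sum_i y_iy_i^T - \mathbb{E} yy^T\bigr\| \le \bigl\|\tfrac1q\sum_i y_iy_i^T\bigr\| + 1\), and the empirical term can be as large as \(\tau^2\). Concretely, with \(q=1\) and \(y = \tau e_1\) with probability \(\tau^{-2}\) (and \(y=0\) otherwise), one has \(\|\mathbb{E} yy^T\| = 1\) yet \(E = 2 - 2\tau^{-2} \to 2\). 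This is really an artifact of how the lemma is phrased here: Rudelson--Vershynin state the result as ``if \(a < 1\) then \(E \le a\),'' and the \(\min(\cdot,1)\) form (copied from Spielman--Srivastava along with the \(\Gamma\)/\(\Omega\) typo) should be read in that conditional sense. So your proof establishes everything that is actually used downstream, but you should either drop the claim that \(E\le1\) holds unconditionally or restate the conclusion in the conditional ``\(a<1\)'' form.
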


We will apply the above lemma to \( \mathbb{E} \| \Phi S \Phi - \Phi \Phi \| \). Note that \( \Phi S \Phi \) can be expressed as
\begin{align}
    \Phi S \Phi = \sum_e S_{e,e} \Phi_{:,e} \Phi_{:,e}^T = \frac{1}{q} 
    \sum_e t_e \frac{\Phi_{:,e}}{\sqrt{p_e}} \frac{\Phi_{:,e}^T}{\sqrt{p_e}}  = \frac{1}{q} \sum_{i=1}^q y_i y_i^T,
\end{align}
for \( y_i \) drawn independently from the distribution \( y = \frac{\Phi_{:,e}}{\sqrt{p_e}} \) with probability \( p_e \). We compute the expectation of \( yy^T \) as follows:
\begin{equation}
    \mathbb{E} yy^T = \sum_e p_e \frac{1}{p_e} \Phi_{:,e} \Phi_{:,e}^T = \Phi^2 = \Phi.
\end{equation}
The last step follows from the fact that \( \Phi \) is a projection matrix, which can be derived as follows:
\begin{align}
    \Phi^2 = C L^+ C^T C L^+ C^T = C L^+ L L^+ C^T = C L^+ C^T = \Phi.
\end{align}
The first equation uses the definition of \( L = C^T C \), while the second equation follows from the fact that \( L^+ L = \sum_{i=2}^{n} u_i u_i^T \), which indicates that \( L^+ L \) is an identity on the image of \( L^+ \), i.e., \( L^+ L L^+ = L^+ \).

Next, to apply Lemma~\ref{lemma:ref}, we calculate \( \| \mathbb{E} yy^T \|_2 \) and the upper bound of \( \| y \|_2 \). Since \( \Phi \) is a projection matrix, its eigenvalues are either \( 1 \) or \( 0 \). To determine the multiplicity of \( 1 \), consider \( \forall y \in \text{im}(C) \), there exists a vector \( x \perp \ker(C) \) such that \( Cx = y \). We have
\begin{equation}\label{eq:identity_phi}
    \Phi y = C L^+ C^T C x =  C L^+ L x = C x = y,
\end{equation}
which suggests that \( \text{im}(C) \subseteq \text{im}(\Phi) \). It is easy to see that \( \text{im}(\Phi) = \text{im}(C L^+ C^T) \subseteq \text{im}(C) \). We can conclude that \( \text{im}(C) = \text{im}(\Phi) \).

Since \( L \) is a positive semidefinite matrix, we derive
\begin{equation}
    x^T L x = x^T C^T C x = \| C x \|_2^2,
\end{equation}
which implies that \( \ker(L) = \ker(C) \) and \( \text{im}(C) = \text{im}(L) \). Given that \( G \) is connected, \( \text{dim}(L) = \text{dim}(C) = n-1 \). Combining this with \( \text{im}(C) = \text{im}(\Phi) \), we have \( \text{dim}(\Phi) = \text{dim}(L) = n-1 \), so the multiplicity of \( 1 \) is \( n-1 \). Hence, we know the norm \( \| \mathbb{E} yy^T \|_2 = \| \Phi \|_2 = 1 \), \( \text{tr}( \Phi) = n-1 \), and \( \sum_e R_e = \text{tr}( \Phi) = n-1 \), which gives
\begin{equation}
    p_e = \frac{(1+S_e)R_e}{\sum_{e'}(1+S_{e'})R_{e'}} \le \frac{R_e}{\sum_{e'}2R_{e'}} = \frac{R_e}{2(n-1)}.
\end{equation}
We also have a bound on the norm of \( y \): \( \| \frac{1}{\sqrt{p_e}} \Phi_{:,e} \|_2 \le  \frac{1}{\sqrt{p_e}} \sqrt{\Phi_{e,e}} = \sqrt{2(n-1)} \). Now we are ready to apply Lemma~\ref{lemma:ref}. By setting \( q=16c^2n\log n / \epsilon^2 \), we arrive at
\begin{equation}
    \mathbb{E} \| \Phi S \Phi - \Phi^2 \| = \mathbb{E} \| \frac{1}{q} \sum_{i=1}^q y_i y_i^T - \mathbb{E} yy^T \| \le c\epsilon \sqrt{\frac{\log(16c^2n\log n/\epsilon^2)2(n-1)}{16c^2n\log n}} \le \frac{\sqrt{2}\epsilon}{4}.
\end{equation}
Using  Markov's inequality, we get
\begin{equation}
    \| \Phi S \Phi - \Phi^2 \| \leq \epsilon
\end{equation}
with high probability. Hence, with high probability, we have \( \| \Phi S \Phi - \Phi^2 \| \le \epsilon \). This leads to the following equivalence:

Since \( \| \Phi S \Phi - \Phi^2 \| \le \epsilon \), that is to say
\[
\sup_{\mathbf{y}\in \mathbb{R}^m, \mathbf{y}\neq \mathbf{0}} \frac{| \mathbf{y}^T \Phi (S-I) \Phi \mathbf{y}|}{\mathbf{y}^T \mathbf{y}} \leq \epsilon.
\]
Now, consider the case \( \forall \mathbf{x} \in \mathbb{R}^n \setminus \ker(\mathbf{C}) \), so \( \mathbf{Cx} \in \mathbb{R}^m \) and \( \mathbf{Cx} \neq 0 \). We have

\[
\sup_{\mathbf{x}\in \mathbb{R}^n, \mathbf{x}\notin \ker(\mathbf{C})} \frac{| \mathbf{x}^T \mathbf{C}^T \Phi (S-I) \Phi \mathbf{C} \mathbf{x} |}{\mathbf{x}^T \mathbf{C}^T \mathbf{C} \mathbf{x}} \leq \epsilon.
\]
Recall that \( \forall \mathbf{y} \in \text{im}(\mathbf{C}) \), \( \Phi \mathbf{y} = \mathbf{y} \) (Eq.~\eqref{eq:identity_phi}), thus we can substitute \( \Phi \mathbf{C} \mathbf{x} \) with \( \mathbf{C} \mathbf{x} \), leading to
\[
\sup_{\mathbf{x} \in \mathbb{R}^n, \mathbf{x} \notin \ker(\mathbf{C})} \frac{|\mathbf{x}^T \mathbf{C}^T (S-I) \mathbf{C} \mathbf{x}|}{\mathbf{x}^T \mathbf{C}^T \mathbf{C} \mathbf{x}} = \sup_{\mathbf{x} \in \mathbb{R}^n, \mathbf{x} \notin \ker(\mathbf{C})} \frac{|\mathbf{x}^T \tilde{L} \mathbf{x} - \mathbf{x}^T L \mathbf{x}|}{\mathbf{x}^T L \mathbf{x}} \leq \epsilon.
\]
Rearranging this inequality yields the desired result for \( \mathbf{x} \notin \ker(\mathbf{C}) \). For \( \mathbf{x} \in \ker(\mathbf{C}) \), it is trivial, since \( \mathbf{x}^T L \mathbf{x} = \mathbf{x}^T \tilde{L} \mathbf{x} = 0 \).

By the equivalence between \( \| \Phi S \Phi - \Phi^2 \| \le \epsilon \) and \( \frac{|\mathbf{x}^T \tilde{L} \mathbf{x} - \mathbf{x}^T L \mathbf{x}|}{\mathbf{x}^T L \mathbf{x}} \le \epsilon \), and using the inequality
\[
\mathbb{P} \left( (1-\epsilon) \mathbf{x}^T L \mathbf{x} \leq \mathbf{x}^T \tilde{L} \mathbf{x} \leq (1+\epsilon) \mathbf{x}^T L \mathbf{x} \right) \geq 1 - \frac{\sqrt{2}}{4},
\]
for \( q = 16c^2n \log n / \epsilon^2 \), we conclude the proof. \qed

\end{proof}

\section{Determining $|E_l|$}\label{appendix:deter_E_l}
Note that \(\kappa = \frac{\|C x\|_\infty^2}{p_{\text{min}} \|C x\|_2^2}\), where \(\|C x\|_2^2 = x^T L x\) depends on the choice of \(x\). Here, we select the leading eigenvector of \(G\) to calculate \(\kappa\), as our goal is to improve connectivity in the latent graph \(G_l\), which requires significantly boosting the small eigenvalues. Using an eigenvector associated with the small eigenvalues of \(G\) might make the small eigenvalues of \(G\) and \(G_l\) too similar, resulting in limited improvement in connectivity. 

After calculating \(\kappa\) and $q=\frac{\kappa^2}{2 \epsilon^2} \log 8$, we estimate \(|E_l|\) by solving the following problem. Suppose we apply sparsification on latent graph $G_l$ by sampling \(q\) random edges with replacement, and observe \(|E|\) distinct edges in the sparsified graph. The question is: what is the expected value of \(|E_l|\), the number of edges in the latent graph? 
To simplify this computation, we assume a uniform edge probability distribution. Specifically, let \(x\) denote the expected value of \(|E_l|\). Then, we have
\begin{equation}
    |E| = x \left(1 - \left(1 - \frac{1}{x}\right)^q\right).
\end{equation}
Solving this equation yields \(k = x - |E|\).

\section{Additional Experimental Results}\label{appendix:additional_exp}

\begin{figure}
    \centering
    \includegraphics[width=0.7\linewidth]{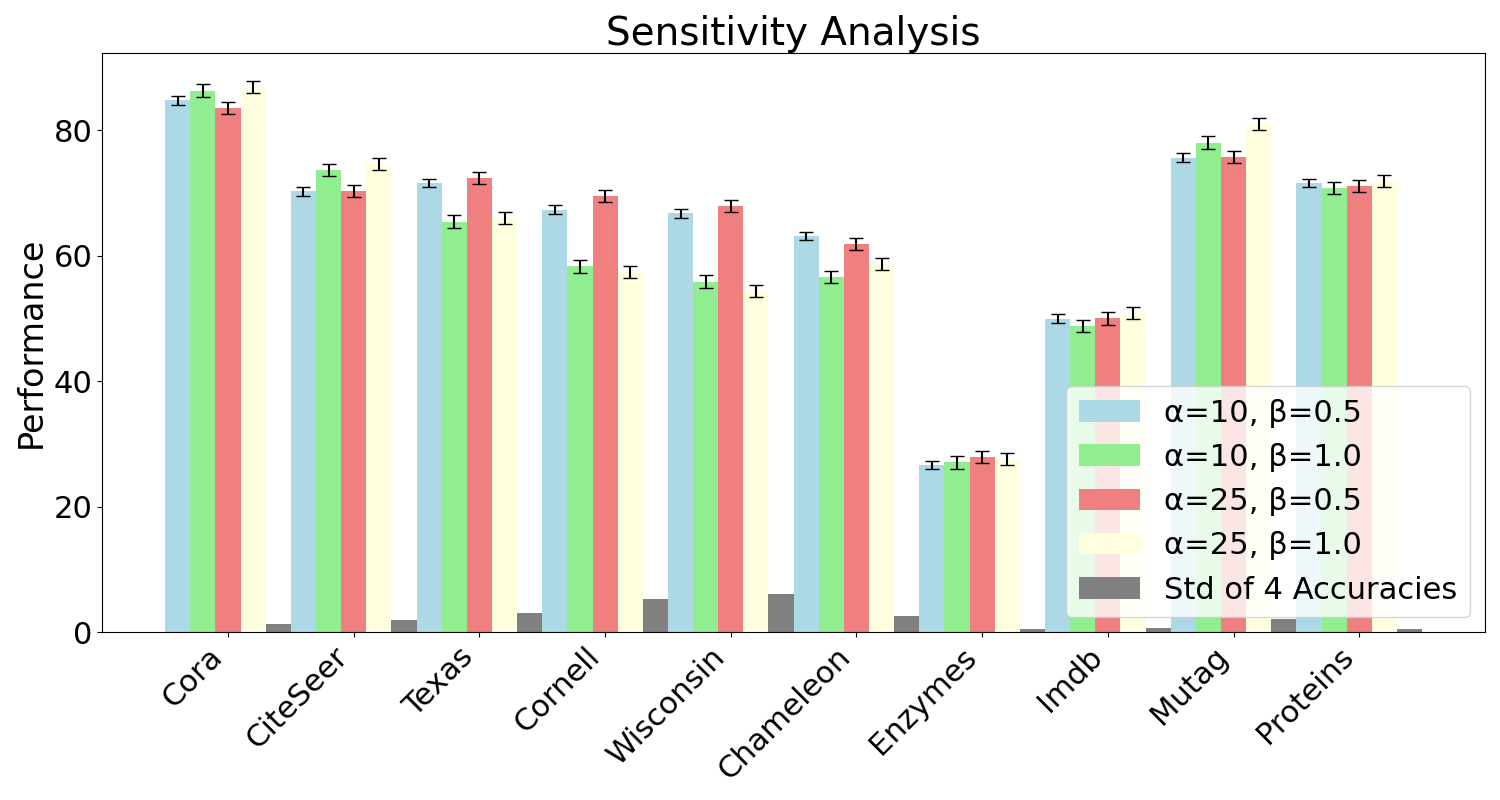}
    \caption{Hyperparameter sensitivity analysis on real-world datasets.}
    \label{fig:hyper-sen}
\end{figure}

\subsection{Hyperparameter Sensitivity Analysis}
Table~\ref{tab:sensitivity} presents the results of a node classification task for a sensitivity analysis of the hyperparameters $\alpha$ and $\beta$. For each node count (20, 200, and 2000 nodes) and homophily level (high and low), classification accuracy is evaluated across various combinations of $\alpha$ and $\beta$. The analysis shows that the classification accuracy is \textbf{relatively stable and not highly sensitive} to variations in these hyperparameters:

\begin{itemize}
    \item \textbf{Alpha ($\alpha$)}:
    \begin{itemize}
        \item For \textbf{smaller graphs} (e.g., 20 nodes), larger values of $\alpha$ ($0.15|E|$) are generally preferred, as more aggressive densification appears to improve classification performance in smaller networks.
        \item For \textbf{larger graphs} (e.g., 200 and 2000 nodes), the classification accuracy remains consistent across $\alpha$ values, with a slight tendency for smaller $\alpha$ ($0.05|E|$) to perform better. This suggests that less aggressive densification is sufficient for improving the connectivity of larger networks.
    \end{itemize}
    
    \item \textbf{Beta ($\beta$)}:
    \begin{itemize}
        \item For \textbf{high homophily}, larger values of $\beta$ (e.g., 1.0) are generally preferred, as they help preserve the community structure by retaining more edges during sparsification.
        \item For \textbf{low homophily}, smaller values of $\beta$ (e.g., 0.5) tend to perform better, as sparser graphs better reduce noise introduced by inter-community edges.
    \end{itemize}
\end{itemize}

We also present hyperparameter sensitivity analysis results on real-world datasets in Figure~\ref{fig:hyper-sen}.
Overall, the results demonstrate that \textbf{classification accuracy is not very sensitive} to changes in $\alpha$ and $\beta$ values within reasonable ranges. This suggests that GOKU's performance remains stable across a wide variety of hyperparameter settings, and the method is robust to changes in densification and sparsification levels.

\begin{table}[p]
\centering
\renewcommand{\arraystretch}{0.9}  
\setlength{\tabcolsep}{4pt} 

\begin{tabular}{|c|c|c|c|}
\hline
\rowcolor{gray!30} \textbf{Nodes - Homophily}
& \textbf{Hyperparameters ($\alpha$, $\beta$)} & \textbf{Classification Accuracy} \\
\hline
\multirow{9}{*}{\textbf{20 - High}}  & ($0.15|E|$, 1.0) & 0.8765 \\
 & ($0.15|E|$, 0.7) & 0.8617 \\
 & ($0.15|E|$, 0.5) & 0.8541 \\
 & ($0.1|E|$, 1.0) & 0.8541 \\
 & ($0.1|E|$, 0.7) & 0.8392 \\
 & ($0.1|E|$, 0.5) & 0.8112 \\
 & ($0.05|E|$, 1.0) & 0.8321 \\
 & ($0.05|E|$, 0.7) & 0.8295 \\
 & ($0.05|E|$, 0.5) & 0.8189 \\
\hline
\multirow{9}{*}{\textbf{20 - Low}}  & ($0.15|E|$, 1.0) & 0.5734 \\
 & ($0.15|E|$, 0.7) & 0.5630 \\
 & ($0.15|E|$, 0.5) & 0.6334 \\
 & ($0.1|E|$, 1.0) & 0.5134 \\
 & ($0.1|E|$, 0.7) & 0.5231 \\
 & ($0.1|E|$, 0.5) & 0.5975 \\
 & ($0.05|E|$, 1.0) & 0.4901 \\
 & ($0.05|E|$, 0.7) & 0.5156 \\
 & ($0.05|E|$, 0.5) & 0.5967 \\
\hline
\multirow{9}{*}{\textbf{200 - High}}  & ($0.15|E|$, 1.0) & 0.9173 \\
 & ($0.15|E|$, 0.7) & 0.8851 \\
 & ($0.15|E|$, 0.5) & 0.9540 \\
 & ($0.1|E|$, 1.0) & 0.9540 \\
 & ($0.1|E|$, 0.7) & 0.9545 \\
 & ($0.1|E|$, 0.5) & 0.9632 \\
 & ($0.05|E|$, 1.0) & 0.8625 \\
 & ($0.05|E|$, 0.7) & 0.8321 \\
 & ($0.05|E|$, 0.5) & 0.7947 \\
\hline
\multirow{9}{*}{\textbf{200 - Low}}  & ($0.15|E|$, 1.0) & 0.5850 \\
 & ($0.15|E|$, 0.7) & 0.5661 \\
 & ($0.15|E|$, 0.5) & 0.6153 \\
 & ($0.1|E|$, 1.0) & 0.5534 \\
 & ($0.1|E|$, 0.7) & 0.5513 \\
 & ($0.1|E|$, 0.5) & 0.6012 \\
 & ($0.05|E|$, 1.0) & 0.4715 \\
 & ($0.05|E|$, 0.7) & 0.4898 \\
 & ($0.05|E|$, 0.5) & 0.5973 \\
\hline
\multirow{9}{*}{\textbf{2000 - High}}  & ($0.15|E|$, 1.0) & 0.9207 \\
 & ($0.15|E|$, 0.7) & 0.8918 \\
 & ($0.15|E|$, 0.5) & 0.8781 \\
 & ($0.1|E|$, 1.0) & 0.9127 \\
 & ($0.1|E|$, 0.7) & 0.8936 \\
 & ($0.1|E|$, 0.5) & 0.8625 \\
 & ($0.05|E|$, 1.0) & 0.9202 \\
 & ($0.05|E|$, 0.7) & 0.9282 \\
 & ($0.05|E|$, 0.5) & 0.8987 \\
\hline
\multirow{9}{*}{\textbf{2000 - Low}}  & ($0.15|E|$, 1.0) & 0.5335 \\
 & ($0.15|E|$, 0.7) & 0.5701 \\
 & ($0.15|E|$, 0.5) & 0.5934 \\
 & ($0.1|E|$, 1.0) & 0.5543 \\
 & ($0.1|E|$, 0.7) & 0.5921 \\
 & ($0.1|E|$, 0.5) & 0.5792 \\
 & ($0.05|E|$, 1.0) & 0.5213 \\
 & ($0.05|E|$, 0.7) & 0.5462 \\
 & ($0.05|E|$, 0.5) & 0.5814 \\
\hline
\end{tabular}

\caption{Node classification results for sensitivity analysis on hyperparameters ($\alpha = 0.05|E|, 0.1|E|, 0.15|E|$ and $\beta = 0.5, 0.7, 1.0$).}
\label{tab:sensitivity}
\end{table}

\begin{figure}
    \centering
    \includegraphics[width=0.8\linewidth]{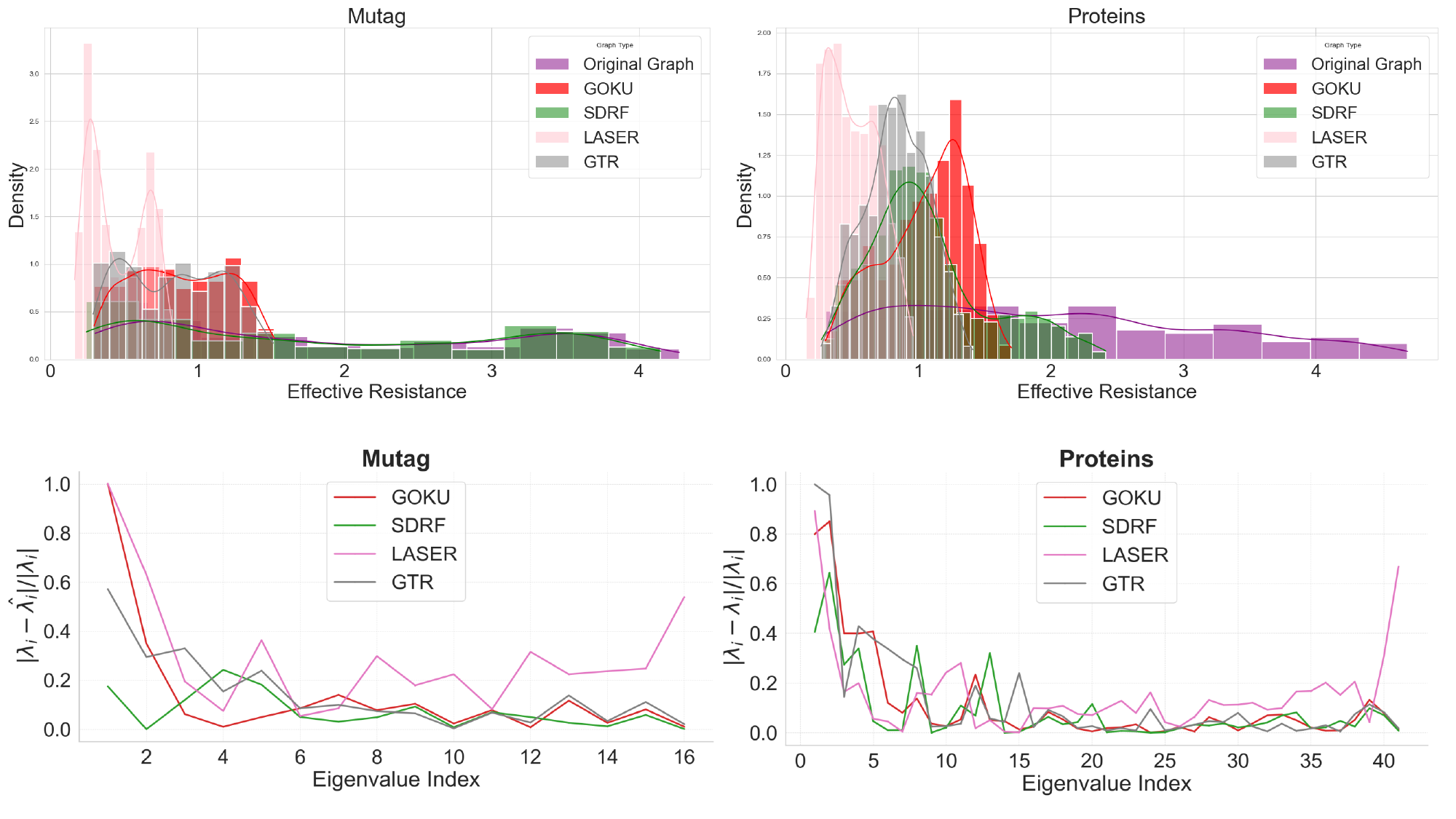}
    \caption{Trade-off between preserving spectrum and reducing ER.}
    \label{fig:trade-off}
\end{figure}

\subsection{More Visualization Results}\label{appendix:visualization}
In Figure~\ref{fig:more_violins}, we present additional visualizations of the spectral distributions for randomly selected graphs from the Mutag, IMDB, and Proteins datasets. Across all datasets, a consistent pattern emerges: GOKU demonstrates the highest fidelity in preserving the original graph spectra. This result underscores its effectiveness in striking a balance between enhancing connectivity and maintaining the structural integrity of the graph. By preserving spectral properties more effectively than alternative methods, GOKU ensures that key graph characteristics remain intact while improving overall connectivity.

Additionally, we provide Figure \ref{fig:trade-off} for the trade-off between preserving spectrum and reducing ER.

\begin{figure}[ht]
    \centering
    \begin{minipage}[b]{0.95\linewidth}
        \centering
        \includegraphics[width=0.95\linewidth]{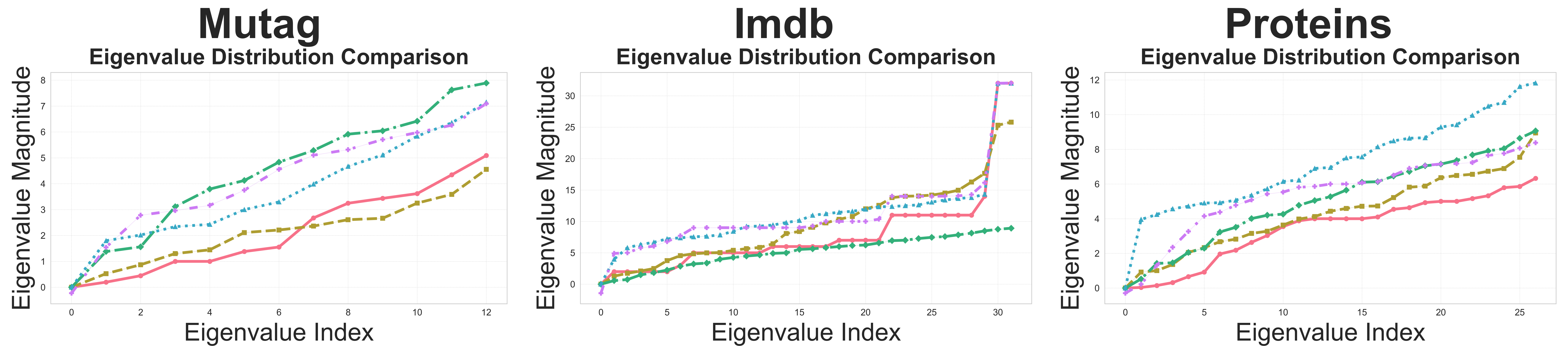}
    \end{minipage}
    \begin{minipage}[b]{0.95\linewidth}
        \centering
        \includegraphics[width=0.95\linewidth]{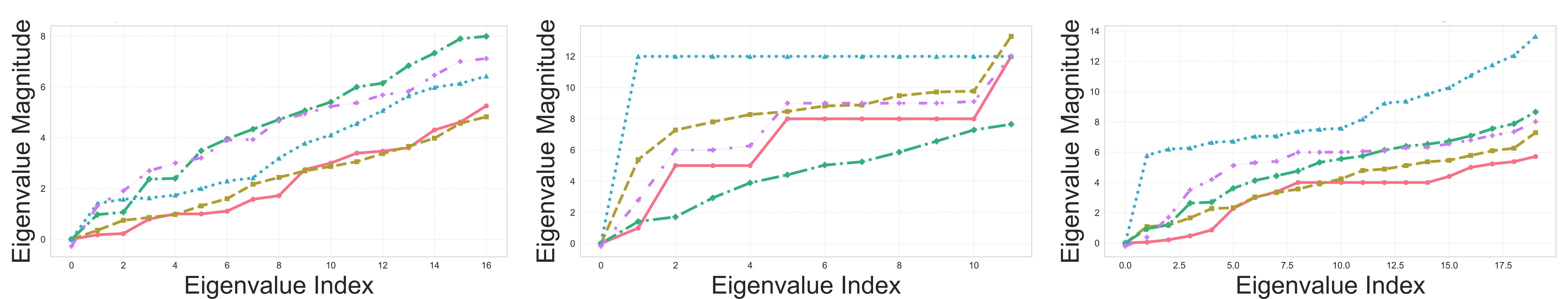}
    \end{minipage}
    \\
    \begin{minipage}[b]{0.95\linewidth}
        \centering
        \includegraphics[width=0.95\linewidth]{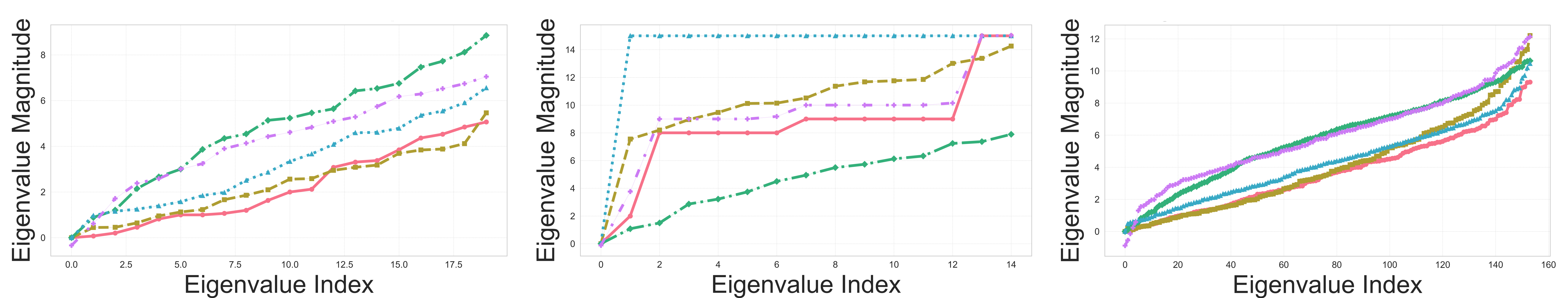}
    \end{minipage}
    \begin{minipage}[b]{0.95\linewidth}
        \centering
        \includegraphics[width=0.95\linewidth]{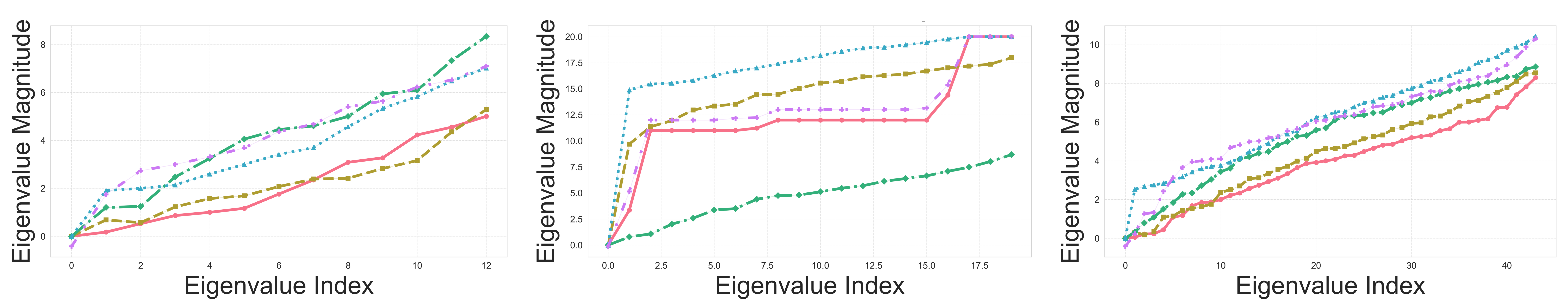}
    \end{minipage}
    \begin{minipage}[b]{0.95\linewidth}
        \centering
        \includegraphics[width=0.95\linewidth]{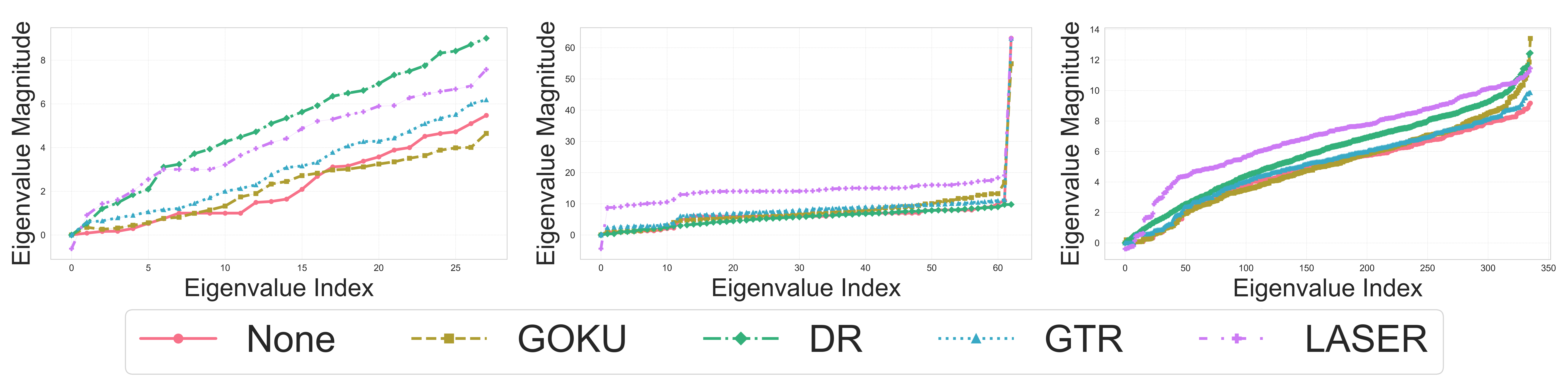}
    \end{minipage}
    \caption{More randomly selected graph spectra visualization results from Mutag, Imdb, and Proteins datasets.}
    \label{fig:more_violins}
\end{figure}


\end{document}